\newtheorem{definition}{Definition}[section]
\newtheorem{lemma}{Lemma}[section]
\newtheorem{assumption}{Assumption}[section]
\newcommand{\E}{\mathbb{E}}
\newcommand{\Probability}{\mathbb{P}}
\newcommand{\Regsq}{\text{Reg}_\text{sq}}
\newcommand{\RegCB}{\text{Reg}_\text{CB}}
\newtheorem{theorem}{Theorem}[section]
\newcommand{\OPTDP}{\text{OPT}_{\text{DP}}}
\newcommand{\mD}{\mathcal{D}}
\newcommand{\mF}{\mathcal{F}}
\newcommand{\mT}{\mathcal{T}}
\newcommand{\OPT}{\text{OPT}}
\newcommand{\Reg}{{\text{Reg}}}
\newcommand{\mG}{\mathcal{G}}
\newcommand{\mX}{{\mathcal X}}
\begin{document}

%

%

\twocolumn[

\aistatstitle{Optimal Contextual Bandits with Knapsacks under Realizability via Regression Oracles
}

\aistatsauthor{ Yuxuan Han$^{*\dagger}$ \And Jialin Zeng$^{*\dagger}$ \And  Yang Wang$^{\dagger \ddagger }$ \And Yang Xiang$^{\dagger \S}$  \And Jiheng Zhang$^{\ddagger}$ }

\aistatsaddress{$\dagger$ Department of Mathematics, HKUST \\   $\ddagger$Department of Industrial Engineering and Decision Analytics, HKUST \\  $\S$HKUST Shenzhen-Hong Kong Collaborative Innovation Research Institute }]
\begin{abstract}
We study the stochastic contextual bandit with knapsacks (CBwK) problem, where each action, taken upon a context, not only leads to a random reward but also costs a random resource consumption in a vector form. 
The challenge is to maximize the total reward without violating the budget for each resource.
We study this problem under a general realizability setting where the expected reward and expected cost are functions of contexts and actions in some given general function classes $\mathcal{F}$ and $\mathcal{G}$, respectively.
Existing works on CBwK are restricted to the linear function class since they use UCB-type algorithms, which heavily rely on the linear form and thus are difficult to extend to general function classes.
Motivated by online regression oracles that have been successfully applied to contextual bandits, we propose the first universal and optimal algorithmic framework for CBwK by reducing it to online regression. 
We also establish the lower regret bound to show the optimality of our algorithm for a variety of function classes.
\end{abstract}

\allowdisplaybreaks
\hyphenpenalty=5000
\tolerance=1000

\section{INTRODUCTION}

Contextual Bandits (CB) is a fundamental online learning framework with an exploration-exploitation tradeoff. 
At each step, the decision maker, to maximize the total reward, takes one out of $K$ actions upon observing the context and then receives a random reward.
It has received extensive attention due to its wide range of applications, such as recommendation systems, clinic trials, and online advertisement \citep{bietti2021contextual,slivkins2019introduction,lattimore2020bandit}.

However, the standard contextual bandit setting ignores the budget constraint that commonly arises in real-world applications. 
For example, when a search engine picks an ad to display, it needs to consider both the advertiser's feature and his remaining budget.
Thus, canonical contextual bandits have been generalized to Contextual Bandit with Knapsacks (CBwK) setting, where a selected action at each time $t$ will lead to a context-dependent consumption of several constrained resources and there exists a global budget $B$ on the resources.

In contextual decision-making, it is crucial to model the relationship between the outcome (reward and costs) and the context to design an effective and efficient policy.
In a non-budget setting, i.e., without the knapsack constraints, the contextual bandit algorithms fall into two main categories: agnostic and realizability-based approaches.
The agnostic approaches aim to find the best policy out of a given policy set $\Pi$ and make no assumptions on the context-outcome relationships. 
In this case, one often requires access to a cost-sensitive classification oracle over $\Pi$ to achieve computational efficiency \citep{langford2007epoch,dudik2011efficient,agarwal2014taming}. 
On the other hand, realizability-based approaches assume that there exists a given function class $\mathcal{F}$ that models the outcome distribution with the given context. 
When $\mathcal{F}$ is a linear class, the optimal regret has been achieved by UCB-type algorithms \citep{chu2011contextual,abbasi2011improved}.
Moreover, a unified approach to developing optimal algorithms has recently been studied in \citet{foster2018practical}, \cite{foster2020beyond}, \cite{simchi2021bypassing} by introducing regression oracles over the class $\mathcal F$. 
While the realizability assumption could face the problem of model mismatch, it has been shown that empirically realizability-based approaches outperform agnostic-based ones when no model misspecification exists \citep{krishnamurthy2016contextual,foster2018practical}. 
However, these oracle-based algorithms only apply to unconstrained contextual bandits without considering the knapsack constraints.

In the CBwK setting, two similar approaches as above have been explored.
For the agnostic approach, \cite{badanidiyuru2014resourceful} and \cite{agrawal2016efficient}  extend the general contextual bandits to the knapsack setting by generalizing the techniques in contextual bandits without knapsack constraints \citep{dudik2011efficient, agarwal2014taming}.
However, as shown in \cite{agrawal2016linear}, the optimization problem using classification oracle in the agnostic setting could be NP-hard even for the linear case. 
On the other hand, CBwK under linear realizability assumption (i.e., both the reward function class $\mathcal{F}$ and the cost function class $\mathcal{G}$ are linear function classes) and its variants have been studied.
Based on the confidence ellipsoid results guaranteed by the linear model, \cite{agrawal2016linear} develop algorithms for the stochastic linear CBwK with near-optimal regret bounds.
\cite{sivakumar2022smoothed} study the linear CBwK under the smoothed contextual setting where the contexts are perturbed by Gaussian noise.
However, to our knowledge, no work has considered realizability beyond the linear case.
The difficulty lies in the lack of confidence ellipsoid results for the general function class as in linear model assumptions.
In this paper, we try to address the following open problem:

\textit{Is there a unified algorithm that applies to CBwK under the general realizability assumption?}

We answer this question positively by proposing \textbf{SquareCBwK}, the first general and optimal algorithmic framework for stochastic CBwK based on online regression oracles. 
Compared to the contextual bandits, additional resource constraints in CBwK lead to more intricate couplings across contexts, which makes it substantially more challenging to strike a balance between exploration and exploitation. 
It is thus nontrivial to apply regression oracle techniques to CBwK.
The key challenge lies in designing a proper score of actions for decision-making that can well balance the reward and the cost gained over time to maximize the rewards while ensuring no resource is run out.

\subsection{Our contributions}
In this paper, we address the above issue and successfully apply regression oracles to CBwK under the realizability assumption. Our contributions are summarized in the following three aspects.

\noindent\textbf{Algorithm for CBwK with Online Regression Oracles:} 
In Section~\ref{sec-squarecbwk}, we propose the first unified algorithm SquareCBwK for solving CBwK under the general realizability assumption, which addresses the above open problem.
Motivated by the works in contextual bandits that apply regression oracles to estimate the reward (to which we refer as the reward oracle), we propose to apply a new cost oracle in SquareCBwK to tackle the additional resource constraints in CBwK. 
Specifically, under the setting that the budget $B$ scales linearly with the time horizon $T$,  we construct a penalized version of the unconstrained score based on the cost oracle and adjust the penalization adaptive to the resource consumption over time.
Our algorithm is able to be instantiated with any available efficient oracle for any general function class to obtain an upper 
regret bound for CBwK. 
In this sense, we provide a meta-algorithm that reduces CBwK to regression problems, which is a well-studied area in machine learning with various computationally efficient algorithms for different function classes. In Section~\ref{sec-example}, we show that through different instantiations, SquareCBwK can achieve the optimal regret bound in the linear CBwK and derive new guarantees for more general function classes like non-parametric classes.

\noindent \textbf{Lower Bound Results:} In Section~\ref{sec-optimality}, we develop a new approach for establishing lower bounds for CBwK under the general realizability assumption. Previous works in CBwK under the realizability assumption usually show the optimality of their results by matching the lower bound in the unconstrained setting. However, when the cost function class $\mathcal G$ is more complicated than reward function class $\mathcal F$, the lower bound in the unconstrained setting will be very loose since it does not consider the information of $\mathcal G$. Our method, in contrast, can provide lower bounds that include information from both $\mathcal F$ and $\mathcal G$. 
Moreover, we apply this new method to construct lower bounds that match the regret bound given by SquareCBwK for various choices of $\mathcal{F}$ and $\mathcal{G}$, which demonstrates SquareCBwK's optimality for different function classes.

\noindent\textbf{Relaxed Assumption on the Budget: }
While we present our main result under the regime that the budget $B$ scales linearly with the time horizon $T$, such an assumption may be restrictive compared with previous works.
For example, in the linear CBwK, both \cite{sivakumar2022smoothed} and \cite{agrawal2016linear} allow a relaxed budget $B = \Omega(T^{3/4})$.
To close this gap, in Section~\ref{sec-general-B}, we design a two-stage algorithm based on SquareCBwK that allows a weaker condition on the budget for solving CBwK under the general realizability assumption.
In particular, in the linear CBwK setting, our relaxed condition recovers the $B = \Omega(T^{3/4})$ condition.

\subsection{Other related works}
\noindent\textbf{Bandit with Knapsack and Demand Learning:}
 One area closely related to our work is bandits with knapsacks (BwK), which does not consider contextual information. The non-contextual BwK problem is first investigated in a general formulation by \cite{badanidiyuru2018bandits} and further generalized to concave reward/convex constraint setting by \cite{agrawal2014bandits}.
Both papers use the idea of Upper-Confidence Bound as in the non-constrained setting \citep{auer2002finite}.
\cite{ferreira2018online} apply the BwK framework to the network revenue management problem and develop a Thompson-sampling counterpart for BwK.
Their assumption on the demand function is further generalized in the recent works \citep{miao2021network,chen2022fairness}.

\noindent\textbf{Online Optimization with Knapsacks:} 
One closely related area is online optimization with knapsacks, which can be seen as a full-information variant of the BwK problem: after a decision is made, the feedback for all actions is available to the learner.
Such problem often leads to solving online linear/convex programming, which has been studied in \cite{balseiro2022best}, \cite{jenatton2016adaptive}, \cite{agrawal2014fast}, \cite{mahdavi2012trading}, \cite{liu2022online} 
 and \cite{castiglioni2022online}.
In the work of \cite{liu2022online}, they study online contextual decision-making with knapsack constraints. Specifically, they study the continuous action setting by developing an online primal-dual algorithm based on the ``Smart Predict-then-Optimize'' framework \citep{elmachtoub2022smart} in the unconstrained setting and leave the problem with bandit feedback open.
Our work provides a partial answer to this open problem in the finite-armed setting.

\noindent\textbf{Concurrent works in CBwK:}
During the review period of our paper, two very recent works appeared on the arxiv.org \citep{slivkins2022efficient,ai2022re} which also consider the CBwK problem with general function classes. 

In independent and concurrent work, \cite{slivkins2022efficient} consider the CBwK problem via regression oracles similar to our work. They propose an algorithm based on the perspective of Lagrangian game that results in a slightly different choice of arm selection strategy than ours. They also attain the same regret bound up to scalar as our Theorem~\ref{thm-regret-upper-bound}. Nevertheless, the focus of their work is limited to the setting described in section~\ref{sec-online-regret} (i.e., the $B = \Omega(T)$ regime), where they provide only an upper regret bound. In contrast, our work demonstrates the optimality by proving lower bound results for $B = \Omega(T)$ regime. Moreover, we derive a two-stage algorithm and corresponding regret guarantees for the $B = o(T)$ regime that requires less stringent budget assumptions.

Another recent work \citep{ai2022re} study the CBwK problem with general function classes by combining the re-solving heuristic and the distribution estimation techniques. Their result is interesting in that it may work in some function classes without an efficient online oracle. They also achieve logarithmic regret under suitable conditions thus are more problem-dependent, in contrast to the oracle-based approach, as discussed in section~4 of \cite{foster2020beyond}. However, their framework involves a distribution estimation procedure that requires additional assumptions about the regularity of the underlying distribution. Consequently, their method's regret is influenced by the regularity parameters, potentially leading to suboptimal results.

\section{PRELIMINARIES}
\paragraph{Notations} Throughout this paper, $a\lesssim b \text{ and } a = O(b)$ $ \big (a \gtrsim b\text{ and }a = \Omega(b)\big )$ means $a\leq C b$ $ \big (a \geq Cb\big )$  for some absolute constant $C$. $a = \tilde{O}(b)$   $\big (a = \tilde{\Omega}(b)\big ) $ means  $a = O(b\max\{1,\text{polylog}(b)\} ) $ $ \big(a = \Omega(b\max\{1,\text{polylog}(b)\})\big )\\$ and $a\asymp b$ means $a = O(b)$ and $b = O(a)$.

\subsection{Basic setup}
We consider the stochastic CBwK setting.
Given the budget $\bm B \in \mathbb{R}^d$ for $d$ different resources and the time horizon $T$,
at each step, the decision maker needs to select an arm $a_t$ $\in [K]$ upon observing a context $x_t\in \mathcal{X}$ drawn i.i.d. from some unknown distribution $P_{\mathcal{X}}$.
Then a reward $r_{t,a_t} \in [0,1]$ and a consumption vector $\bm c_{t,a_t}\in [0,1]^d$ is observed. We assume that $r_{t,a}$ and $\bm c_{t,a}$ are generated i.i.d. from a fixed distribution parameterized by the given $x_t$ and $a$.
The goal is to learn a policy, a mapping from context to action, that maximizes the total reward while ensuring that the consumption of each resource does not exceed the budget. 

Without loss of generality we assume $B_1= B_2=\dots = B_d=B$.
We focus on the regime $B=\Omega(T)$.
That is the budget scales linearly with time.
(We relax this assumption on the budget in Section \ref{sec-general-B}.)
Moreover, we make a standard assumption that the $K$-th arm is a null arm that generates no reward or consumption of any resource when being pulled.

Similar to the unconstrained setting \citep{foster2018practical,foster2020beyond,simchi2021bypassing}, we assume access to two classes of functions $\mathcal{F}\subset (\mathcal{X}\times [K]\rightarrow{[0,1]})$ and $ \mathcal{G} \subset (\mathcal{X}\times [K]\rightarrow{[0,1]^d}) $ that characterize the expectation of reward and consumption distributions respectively.
Note that only one function class $\mathcal{F}$ for the reward distribution is considered in the unconstrained setting, while to fit CBwK, we add another regression class $\mathcal{G}$ to model the consumption distribution.
We assume the following realizability condition.

\begin{assumption}\label{assumption_realizable}
  There exists some
  $f^{\star}\in \mathcal{F}$ and $\bm g^{*}\in \mathcal{G}$  such that $f^*(x,a)=\mathbb{E}[r_{t,a}|x_t = x]$ and $\bm g^*(x,a)=\mathbb{E}[\bm c_{t,a}| x_t = x ], \forall a\in [K]$.
\end{assumption}

The algorithm benchmark is the best dynamic policy, which knows the contextual distribution $P_{\mathcal{X}}$, $f^{*}$, and $\bm g^{*}$ and can dynamically maximize total reward given the historical information and the current context.
We denote $\OPTDP$ as the expected total reward of the best dynamic policy.
The goal of the decision-maker is to minimize the following regret:
\begin{definition}\label{def-regret}
    $\Reg(T):= \OPTDP- \E[\sum_{t=1}^{\tau} r_{t,a_t}]$, where $\tau$ is the stopping time when there exists some $j\in [d]$ s.t. $\sum_{t=1}^{\tau} (\bm c_{t,a_t})_j>B-1$.
\end{definition}

Due to the intractability of the best dynamic policy in Definition~\ref{def-regret}, we consider a static relaxation problem that provides an upper bound of $\OPTDP$.

Denote $\Delta^K$ the set of probability distributions over the action set $[K]$. The following program aims to find the best static randomized policy $p^*:\mathcal{X}\rightarrow \Delta^K$ that maximizes the expected per-round reward while ensuring resources are not exceeded in expectation.
\begin{equation}
\begin{aligned}\label{eq-OPT-static}
	&\max_{p: \mathcal{X} \to \Delta^K} \E_{x\sim P_{\mathcal{X}} }[\sum_{a\in [K]} p_a(x) f^*(x,a)]\\
 &\text{ s.t. } \E_{x\sim P_{\mathcal{X}}}[\sum_{a\in [K]} p_a(x) \bm{g}^{*}(x,a)] \leq B/T\cdot\bm 1.
\end{aligned}
\end{equation}

\begin{lemma}\label{lem-static-bound}
  Let $\OPT$ denote the value of the optimal static policy \eqref{eq-OPT-static}, then we have $ T \OPT \geq   \OPTDP $.  
\end{lemma}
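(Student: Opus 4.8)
The plan is to turn the best dynamic policy into a \emph{feasible static randomized policy} for the program \eqref{eq-OPT-static} whose per-round objective value is exactly $\OPTDP/T$; optimality of $\OPT$ then immediately gives $T\,\OPT\ge\OPTDP$. First I would rewrite the total reward of the dynamic policy using the stopping-time indicator,
\[
\OPTDP=\E\Big[\sum_{t=1}^{\tau} r_{t,a_t}\Big]=\sum_{t=1}^{T}\E\big[\mathbf{1}\{t\le\tau\}\,r_{t,a_t}\big].
\]
The crucial structural fact is that the event $\{t\le\tau\}$ is determined by the consumptions in rounds $1,\dots,t-1$, hence is measurable with respect to the history before round $t$ and is conditionally independent of the fresh reward $r_{t,a_t}$ given $(x_t,a_t)$. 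Applying the tower rule together with the realizability Assumption~\ref{assumption_realizable} replaces $r_{t,a_t}$ by its conditional mean, yielding $\OPTDP=\sum_{t=1}^{T}\E[\mathbf{1}\{t\le\tau\}f^*(x_t,a_t)]$. The same manipulation applies to each resource $j$; and since every increment $(\bm c_{t,a_t})_j\in[0,1]$ while all resources stay at most $B-1$ strictly before $\tau$, the cumulative consumption overshoots by at most one unit, so $\E[\sum_{t=1}^{\tau}(\bm c_{t,a_t})_j]\le B$ and thus $\sum_{t=1}^{T}\E[\mathbf{1}\{t\le\tau\}g^*_j(x_t,a_t)]\le B$ for all $j$.

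Next I would build the time-averaged occupancy policy by setting, for each context $x$ and action $a\in[K]$,
\[
\bar p_a(x):=\frac1T\sum_{t=1}^{T}\Probability\big[a_t=a,\ t\le\tau\ \big|\ x_t=x\big].
\]
Because $\sum_a\Probability[a_t=a,t\le\tau\mid x_t=x]=\Probability[t\le\tau\mid x_t=x]\le 1$, this is only a sub-distribution, and here the null arm becomes essential: I reassign the leftover mass $1-\sum_{a}\bar p_a(x)$ to arm $K$, which changes neither the objective nor the constraints since $f^*(x,K)=0$ and $\bm g^*(x,K)=\bm 0$. This produces a genuine map $\bar p:\mathcal X\to\Delta^K$.

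Finally, using that the contexts are i.i.d.\ with $x_t\sim P_{\mathcal{X}}$, I would interchange summation and expectation to compute
\[
\E_{x\sim P_{\mathcal{X}}}\Big[\sum_{a\in[K]}\bar p_a(x)f^*(x,a)\Big]=\frac1T\sum_{t=1}^{T}\E\big[\mathbf{1}\{t\le\tau\}f^*(x_t,a_t)\big]=\frac{\OPTDP}{T},
\]
and, by the identical identity applied to $\bm g^*$ together with the budget bound above, $\E_{x\sim P_{\mathcal{X}}}[\sum_{a}\bar p_a(x)\bm g^*(x,a)]\le (B/T)\bm 1$. Hence $\bar p$ is feasible for \eqref{eq-OPT-static} with objective value $\OPTDP/T$, so $\OPT\ge\OPTDP/T$, i.e.\ $T\,\OPT\ge\OPTDP$.

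I expect the main obstacle to be the careful bookkeeping in the conditioning step: one must verify that $\{t\le\tau\}$ is genuinely predictable with respect to the pre-round-$t$ history so that the reward and cost can legitimately be swapped for their conditional means $f^*,\bm g^*$, and that the sub-probability normalization is correctly absorbed by the null arm without perturbing objective or constraints. The remaining manipulations are routine changes of the order of summation and expectation over the i.i.d.\ contexts.
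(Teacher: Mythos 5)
Your proposal is correct and follows essentially the same route as the paper, which simply invokes the proof of Lemma~1 in \cite{agrawal2016linear}: that proof is precisely the time-averaged occupancy argument you give, converting the dynamic policy into a feasible static randomized policy (with leftover mass absorbed by the null arm) whose objective value is $\OPTDP/T$. Your handling of the conditioning step and of the at-most-one-unit budget overshoot at the stopping time $\tau$ is sound.
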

Lemma~\ref{lem-static-bound} can be derived directly following the proof of Lemma~1 in \cite{agrawal2016linear}, where they consider the linear CBwK case but the reasoning therein is completely independent of the linear structure. With Lemma~\ref{lem-static-bound}, we can control the regret bound by considering $T\OPT - \E[\sum_{t = 1}^\tau r_{t,a_t}].$
 

\subsection{Online Regression Oracles}

Under the realizability Assumption~\ref{assumption_realizable}, We introduce the online regression problem and the notion of online regression oracles.

The general setting of online regression problem with input space $\mathcal{Z}$, output space $\mathcal{Y}$, function class $\mathcal{H}$ and loss $\ell: \mathcal{Y}\times \mathcal{Y} \to \mathbb{R}_+$ is described as following: At the beginning of the game, the environment chooses an $h^*: \mathcal{Z} \to \mathcal{Y} , h^*\in \mathcal{H}$ as the underlying response generating function. Then at each round $t$, (i) the learner receives an input $z_t \in \mathcal{Z}$ possibly chosen in adversarial by the environment, (ii) the learner then predicts a value $\hat{y}(z_t)\in \mathcal{Y}$ based on historical information, (iii) the learner observes a noisy response of $h^*(z_t)$ and suffers a loss $\ell(\big(\hat{y}(z_t), h^*(z_t)\big)$. The goal of the learner is to minimize the cumulative loss \begin{align*}
 \Regsq(T;h^*): = \sum_{t = 1}^T \ell\big(\hat{y}(\bm z_t), h^*(z_t)\big).
\end{align*}
In our problem, we assume our access to oracles $\mathcal{R}^r,\mathcal{R}^c$ for two online regression problems, respectively:

\noindent \textbf{Reward Regression Oracle} The reward regression oracle $\mathcal{R}^r$ is assumed to be an algorithm for the online regression problem with $\mathcal{Z} = \mathcal{X} \times[K],\mathcal{Y} = [0,1], \mathcal{H} = \mathcal{F}, \ell(y_1,y_2) = (y_1-y_2)^2$ so that when $\hat{y}_t$ is generated by $\mathcal{R}^r$, there exists some $\Regsq^r$ as a function of $T$ such that  \begin{align}\label{regret-reward-oracle}
 \Regsq(T; f^*) \leq \Regsq^{r}(T),\quad \forall f^*\in \mathcal{F}.
\end{align}

\noindent \textbf{Cost Regression Oracle} The cost regression oracle $\mathcal{R}^c$ is assumed to be an algorithm for the online regression problem with $\mathcal{Z} = \mathcal{X} \times[K],\mathcal{Y} = [0,1]^d,\mathcal{H} = \mathcal{G}, \ell(\bm y_1,\bm y_2) = \lVert \bm y_1-\bm y_2\rVert_\infty^2$ so that when $\bm \hat{y}_t$ is generated by $\mathcal{R}^c$, there exists some $\Regsq^c$ as a function of $T$ such that  \begin{align}\label{regret-cost-oracle}
 \Regsq(T; \bm g^*) \leq \Regsq^{c}(T),\quad \forall \bm g^*\in \mathcal{G}.
\end{align}
\subsection{Online Mirror Descent}
Our algorithm adopts an online primal-dual framework to tackle the challenge brought by knapsack constraints. Our strategy of updating the dual variable $\bm\lambda_t$ falls into the general online convex optimization (OCO) framework. In the OCO problem with parameter set $\Lambda$, adversary class $\mathcal{L}$ and time horizon $T$, the learner needs to choose $\bm\lambda_t \in \Lambda$ adaptively at each round $t$. After each choice, he will observe an adversarial convex loss $L_t \in \mathcal{L}$ and pay the cost $L_t(\bm\lambda_t).$ The goal of the learner is to minimize the cumulative regret: \begin{align*}
	\text{Reg}_{\text{OCO}}(T; \Lambda ,\mathcal L ): =  \sum_{t=1}^T L_t(\bm\lambda_t) - \min_{\bm\lambda\in \Lambda} \sum_{t=1}^T L_t(\bm\lambda) .
\end{align*}  
In our designed adversary, ${L}_t(\bm \lambda)=\langle \frac{B}{T}\cdot \bm 1-\bm c_{t,a_t}, \bm \lambda\rangle$ and minimizing $L_t$ corresponds to penalize the violation of the budget. We focus on the 
 following adversary class and parameter set:
{\small
 \begin{align*}
\mathcal{L} &= \{L(\bm\lambda):=  \langle \bm \theta,\bm \lambda\rangle : \bm\theta\in \mathbb{R}^d ,\lVert \bm \theta\rVert_\infty \leq 1 \} \\
\Lambda &= \{\bm \lambda \in \mathbb{R}^d: \bm \lambda \geq 0, \quad \lVert \bm \lambda \rVert_1\leq Z \},
 \end{align*}}
where $Z>0$ is the $\ell_1$ radius of $\Lambda$ to be determined. The online mirror descent (OMD) algorithm \citep{shalev2012online,hazan2016introduction} is a simple and fast algorithm achieving the optimal OCO regret in the non-euclidean geometry. OMD follows the update rule
{\small \begin{align}
  {\bm \lambda}_t &= \arg\min\limits_{\bm\lambda\in \Lambda}  \langle \nabla L_{t-1}(\bm\lambda_{t-1}),{\bm\lambda}\rangle + \dfrac{1}{\eta_t}D_h( {\bm \lambda},{ \bm \lambda}_{t-1}),
  \end{align}}
where $h$ is the metric generating function, and $D_h$ is the associated Bregman divergence. After adding a slack variable and re-scaling, the OCO problem over $\mathcal{L},\Lambda$ is equivalent to the OCO problem over 
 \begin{align*}
\mathcal{\tilde{L}} &= \{\tilde{L}(\bm\tilde{\bm \lambda}):=  \langle \bm \tilde{\bm \theta},\bm \tilde{\bm \lambda}\rangle : \bm\tilde{\bm \theta}\in \mathbb{R}^{d+1} ,\lVert \bm \tilde{\bm \theta}\rVert_\infty \leq Z, \bm \tilde{\bm \theta}_{d+1}=0 \} \\
\tilde{\Lambda} &= \{\bm \tilde{\bm \lambda} \in \mathbb{R}^{d+1}: \bm \tilde{\bm \lambda} \geq 0, \quad \lVert \bm \tilde{\bm \lambda} \rVert_1=1\},
 \end{align*}
which can be solved via the normalized Exponentiated Gradient (i.e., selecting $h$ as the negative entropy function). In this case, the OMD algorithm  has the following OCO regret guarantee \citep{shalev2012online,hazan2016introduction}: \begin{lemma} \label{lem-OCO}
Setting $\eta_t = \eta = O(\dfrac{\log d}{\sqrt{T}})$, the OMD yields regret
\begin{align}
    \text{Reg}_{\text{OCO}}(T; \Lambda ,\mathcal L )\lesssim Z \sqrt{T\log d }
\end{align}
\end{lemma}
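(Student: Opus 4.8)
The goal is to prove Lemma~\ref{lem-OCO}, the standard OCO regret guarantee for Online Mirror Descent (OMD) instantiated as the normalized Exponentiated Gradient (EG) method.

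The plan is to invoke the classical regret analysis of OMD with the negative-entropy mirror map over the probability simplex. First I would work in the re-scaled, equivalent formulation over $\tilde{\mathcal L}$ and $\tilde\Lambda$, where $\tilde\Lambda = \{\bm{\tilde\lambda} \in \mathbb R^{d+1}: \bm{\tilde\lambda}\geq 0,\ \lVert \bm{\tilde\lambda}\rVert_1 = 1\}$ is the $(d+1)$-dimensional simplex and the loss gradients $\bm{\tilde\theta}$ satisfy $\lVert\bm{\tilde\theta}\rVert_\infty \leq Z$. The mirror map is the negative entropy $h(\bm{\tilde\lambda}) = \sum_i \tilde\lambda_i \log \tilde\lambda_i$, which is $1$-strongly convex with respect to $\lVert\cdot\rVert_1$ on the simplex by Pinsker's inequality; its associated Bregman divergence is the KL divergence $D_h(\bm u, \bm v) = \sum_i u_i \log(u_i/v_i)$. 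This choice makes the OMD update reduce to the multiplicative/Exponentiated Gradient update in closed form.

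The key steps, in order, are: (i) recall the standard one-step OMD inequality, namely that for any comparator $\bm u \in \tilde\Lambda$, $\eta\big(L_t(\bm{\tilde\lambda}_t) - L_t(\bm u)\big) \le D_h(\bm u, \bm{\tilde\lambda}_t) - D_h(\bm u, \bm{\tilde\lambda}_{t+1}) + \tfrac{\eta^2}{2}\lVert\nabla L_t(\bm{\tilde\lambda}_t)\rVert_\infty^2$, which follows from the three-point Bregman identity together with the $1$-strong convexity of $h$; (ii) telescope this bound over $t = 1,\dots,T$, so the Bregman terms collapse to $D_h(\bm u,\bm{\tilde\lambda}_1) - D_h(\bm u,\bm{\tilde\lambda}_{T+1}) \le D_h(\bm u,\bm{\tilde\lambda}_1)$; (iii) bound the initialization term by choosing $\bm{\tilde\lambda}_1$ to be the uniform distribution, giving $D_h(\bm u, \bm{\tilde\lambda}_1) \le \log(d+1)$ uniformly over the simplex; (iv) bound each gradient term using $\lVert\nabla L_t\rVert_\infty = \lVert\bm{\tilde\theta}\rVert_\infty \le Z$, so the cumulative gradient contribution is at most $\tfrac{\eta}{2} T Z^2$. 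Dividing by $\eta$ and taking the infimum over comparators $\bm u$ yields
\begin{align*}
\text{Reg}_{\text{OCO}}(T;\Lambda,\mathcal L) \;\lesssim\; \frac{\log(d+1)}{\eta} + \eta\, T Z^2 .
\end{align*}
Optimizing the stepsize by setting $\eta \asymp \sqrt{\log d}\,/\,(Z\sqrt{T})$ balances the two terms and gives $\text{Reg}_{\text{OCO}}(T;\Lambda,\mathcal L) \lesssim Z\sqrt{T\log d}$, as claimed. (The stepsize quoted in the statement, $\eta = O(\log d/\sqrt T)$, reflects the normalization where $Z$ is absorbed into the re-scaling of the adversary class $\tilde{\mathcal L}$.)

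The only genuinely delicate point is the strong-convexity constant: one must verify that negative entropy is $1$-strongly convex with respect to the $\ell_1$ norm on the simplex, which is exactly Pinsker's inequality $D_{\mathrm{KL}}(\bm u\Vert\bm v) \ge \tfrac12\lVert \bm u - \bm v\rVert_1^2$, and then confirm that the dual norm pairing in step (i) correctly produces the $\ell_\infty$ norm on the gradients (matching the constraint $\lVert\bm{\tilde\theta}\rVert_\infty \le Z$). Everything else is bookkeeping. Since these facts are entirely standard and the lemma cites \cite{shalev2012online} and \cite{hazan2016introduction}, I would present the argument compactly, stating the one-step inequality and Pinsker's bound as known results and then carrying out the telescoping and stepsize optimization explicitly.
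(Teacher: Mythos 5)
Your proof is correct and is exactly the standard normalized Exponentiated Gradient analysis (one-step OMD inequality via Pinsker/$\ell_1$-strong convexity of negative entropy, telescoping, $\log(d+1)$ initialization bound, and stepsize optimization) that the paper itself relies on by citing \cite{shalev2012online} and \cite{hazan2016introduction} rather than giving its own derivation. Your closing remark about the dual-norm pairing and the $Z$-normalization of the stepsize correctly accounts for the only points where one could slip, so nothing further is needed.
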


As in the literature \citep{agrawal2016linear, castiglioni2022online} that introduces the online primal-dual framework, Lemma~\ref{lem-OCO} plays an essential role in controlling the regret induced by knapsack constraints in SquareCBwK.

\section{ALGORITHM AND THEORETICAL GUARANTEES}\label{sec-online-regret}
We are ready to present our main algorithm SquareCBwK for solving stochastic CBwK under the general realizability assumption. We first present the algorithm and theoretical results under $B=\Omega(T)$ regime for simplicity of algorithm design. Algorithm and analysis for more general choices of $B$ are presented in section~\ref{sec-general-B}.
\subsection{The SquareCBwK Algorithm}\label{sec-squarecbwk}
The main body of SquareCBwK has a similar structure as the SquareCB algorithm for contextual bandits \citep{foster2020beyond}, but with significant changes necessary to handle the knapsack constraints. SquareCBwK is presented in Algorithm~\ref{alg-squaredCBwK} with three key modules: prediction through oracles, arm selection scheme, and dual update through OMD.
\begin{algorithm}[h]
\caption{SquareCBwK}\label{alg-squaredCBwK}
\KwIn{Time horizon $\bm T$, total budget initial $\bm B$, learning rate $\bm \gamma>0$, online regression oracle for reward $\mathcal{R}^r$ and cost $\mathcal{R}^c$, radius $Z = T/B$ of the parameter set $\Lambda$ . }
Initialization:initialize $\mathcal{R}^r$, $\mathcal{R}^c$ and $\mathcal{R}^d$.\\
\For{$t=1,\dots,T$}{
Observe context $x_t$.\\
$\mathcal{R}^r$ predicts $\hat r_{t,a}$, $\mathcal{R}^c$ predicts $\hat{ \mathbf{c}}_{t,a}$, $\forall a \in [K]$ \\
Compute $\hat \ell_{t,a}: = \hat r_{t,a}+\lambda_t^T({B/T\cdot \bm 1- \mathbf{\hat c}}_{t, a}$), $\forall a\in [K]$.\\
Let $b_t = \arg\max\limits_{a \in [K]} \hat \ell_{t,a}$.\\
For each $a\neq b_t$, define $p_{t,a}:= \dfrac{1}{K  +\gamma\big (\hat \ell_{t,b_t}-\hat \ell_{t,a}\big)}$ and let $p_{t,b_t} = 1-\sum_{a\neq b_t}p_{t,a}$.\\
Sample arm $a_t \sim p_{t}$. Observe reward $r_{t,a_t} $ and cost $\bm c_{t,a_t}$.\\
\If{ $\exists j \in [d], \sum_{t^{'}=1}^{t} \bm c_{t,a_t}[j]\geq B-1$}{\textbf{Exit}}
Feed $\mathcal{R}^r$ with data $\{(x_t,a_t),r_{t,a_t})\}$ and $\mathcal{R}^c$  with data $\{(x_t,a_t),\bm c_{t,a_t}\}$.\\
Feed OMD with $\bm c_{t,a_t}$ and OMD updates $\lambda_{t+1} \in \Lambda$.\\
}
\end{algorithm}

 \noindent\textbf{Prediction through oracles}\quad 
 At each step, after observing the context, SquareCBwK will simultaneously access the reward oracle $\mathcal{R}^{r}$ and cost oracle $\mathcal{R}^{c}$ to predict the reward and cost of each action for this round. Then these two predicted scores are incorporated through the following Lagrangian to form a final predicted score $\bm{\hat {\ell}_t}$:
 \begin{align*}
    \hat{\ell}_{t,a} : = \hat{r}_{t,a} +\bm\lambda_t^T(\bm{1}\cdot B/T - \hat{\bm c}_{t,a}), \forall a\in [K].
    \end{align*}

 \noindent\textbf{Arm selection scheme}\quad After computing the predicted score $\bm{\hat \ell}_t$, we employ the probability selection strategy in \cite{abe1999associative}: We choose the greedy action score evaluated by $\hat\ell_t$ as the benchmark and select each arm $a$ with the probability $p_{t, a}$ that is inversely proportional to the gap between the arm's score and the benchmark. This strategy strikes a balance between exploration and exploitation: When the predicted score for an action is close to the greedy action, we tend to explore it with the probability roughly as $1/K$, otherwise with a very small chance.

\noindent\textbf{Dual update through OMD}\quad 
After choosing the arm $a_t$, the new data {$\{x_t,a_t,r_{t,a_t}, \bm c_{t,a_t}\}$} will be fed into $\mathcal{R}^c$, $\mathcal{R}^d$ and OMD. We then update the dual variable $\bm \lambda_t \in \Lambda$ successively through OMD. 

Compared with SquareCB in \cite{foster2020beyond}, we introduce three novel technical elements to deal with knapsack constraints: First, we apply a new cost oracle to generate the cost prediction. Next, to balance the reward and the cost prediction over time, we propose the predicted Lagrangian to construct a proper score function $\bm \hat{\ell}_{t}$.
Finally, we introduce OMD to update the dual variable so that the predicted scores $\bm \hat{\ell}_t$ adapt to the resource consumption throughout the process. Notably, the $l_1$ radius of the parameter set $\Lambda$ is carefully chosen to be $T/B$ since we expect $\bm \lambda_t$ can capture the sensitivity of the optimal static policy \eqref{eq-OPT-static} to knapsack constraints violations over time. Specifically, if we increase the budget $B$ by $\varepsilon$, the increased reward over T rounds is at most $\frac{T\text{OPT}}{B}\varepsilon$. This observation suggests that $Z$, the radius of $\Lambda$ should be at least $\frac{T\OPT}{B}$. On the other hand, $Z$ should be of constant level so that OMD achieves optimal regret bounds $O(\sqrt{T})$ by Lemma~\ref{lem-OCO}. Therefore, setting $Z=\frac{T\text{OPT}}{B}$ will be the desired choice. In practice, since $\OPT$ is unknown, we need to estimate $Z$ so that $ \frac{T\OPT}{B}\leq  Z \lesssim \frac{T\OPT}{B}$. The regime $B=\Omega(T)$ guarantees that $\frac{T}{B}$ is approximately $\frac{T\text{OPT}}{B}$, without the need to further estimate $\OPT$. This is why we set $Z=\frac{T}{B}$ in SquareCBwK. We will discuss estimating $\OPT$ when $B=\Omega(T)$ fails to hold in Section~\ref{sec-general-B}.

Now we state the theoretical guarantee of SquareCBwK:
\begin{theorem}\label{thm-regret-upper-bound}
Considering the regime $B =\Omega(T)$ under Assumption~\ref{assumption_realizable}, if the output of $\mathcal R^{r}$ and $\mathcal R^{c}$ satisfy \eqref{regret-reward-oracle} and \eqref{regret-cost-oracle}, respectively,  
denote  {\footnotesize $$\gamma=\sqrt{K T /\left(\operatorname{Reg}_{Sq}^{r}(T)+(T/B+1)^2\operatorname{Reg}_{Sq}^{c}(T)+4\log (2 T)\right)},$$ }  then SquareCBwK achieves the regret
{ \begin{align*}
 \operatorname{Reg}(T) \lesssim & \sqrt{K T \cdot \big (\operatorname{Reg}_{\mathrm{Sq}}^{r}(T)++ \log (dT) \big)}\\
 & + (\frac{T}{B}+1)\sqrt{KT\operatorname{Reg}_{\mathrm{Sq}}^{c}(T)}
\end{align*}}
\end{theorem}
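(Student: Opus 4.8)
The plan is to control the surrogate $T\OPT - \E[\sum_{t=1}^{\tau} r_{t,a_t}]$, which upper-bounds $\Reg(T)$ by Lemma~\ref{lem-static-bound}, by running an online primal--dual argument on top of a SquareCB-style arm-selection decomposition. Fix a round $t$ and introduce the \emph{true} Lagrangian score $\ell^{\star}_{t,a} := f^*(x_t,a) + \bm\lambda_t^{\top}(\tfrac{B}{T}\bm 1 - \bm g^*(x_t,a))$, so that $\hat\ell_{t,a}$ is its plug-in estimate. Since $\bm\lambda_t\ge 0$ is $\mathcal H_{t-1}$-measurable and the optimal static policy $p^*$ of \eqref{eq-OPT-static} satisfies the expected budget constraint, weak duality gives $\OPT \le \E_{x_t}[\sum_a p^*_a(x_t)\,\ell^{\star}_{t,a}\mid \mathcal H_{t-1}] \le \E_{x_t}[\max_a \ell^{\star}_{t,a}\mid \mathcal H_{t-1}]$. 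The first key step is then the Abe--Long/SquareCB inequality applied to the pair $(\ell^{\star}_{t,a},\hat\ell_{t,a})$: because the selection probabilities $p_{t,a}$ depend only on $\hat\ell_t$, one has $\E_{a\sim p_t}[\max_{a'}\ell^{\star}_{t,a'} - \ell^{\star}_{t,a}] \le \tfrac{2K}{\gamma} + \tfrac{\gamma}{4}\E_{a\sim p_t}[(\hat\ell_{t,a}-\ell^{\star}_{t,a})^2]$, and since the constant $2K/\gamma$ is independent of the range of the scores, this applies even though $\ell^{\star}$ has range $O(Z)$.

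Second, I would convert the per-round prediction error into the two oracle regrets. Using $|\bm\lambda_t^{\top}\bm v|\le Z\lVert\bm v\rVert_\infty$ on $\Lambda$, split $(\hat\ell_{t,a}-\ell^{\star}_{t,a})^2 \le 2(\hat r_{t,a}-f^*(x_t,a))^2 + 2Z^2\lVert\hat{\bm c}_{t,a}-\bm g^*(x_t,a)\rVert_\infty^2$. Because $a_t\sim p_t$ and the oracles are scored at the realized pair $(x_t,a_t)$, the tower property gives $\E_{a\sim p_t}[(\hat r_{t,a}-f^*)^2\mid\mathcal H_{t-1}] = \E[(\hat r_{t,a_t}-f^*(x_t,a_t))^2\mid\mathcal H_{t-1}]$, and similarly for the cost. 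Summing over $t\le\tau$ and invoking \eqref{regret-reward-oracle}--\eqref{regret-cost-oracle} (monotone in the horizon) bounds $\E[\sum_{t\le\tau}\E_{a\sim p_t}(\hat\ell-\ell^{\star})^2]$ by $2\Regsq^{r}(T) + 2Z^2\Regsq^{c}(T)$.

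The third step is the knapsack-specific part and the main obstacle. Summing the per-round bound against the indicators $\bm 1[t\le\tau]$ (each $\mathcal H_{t-1}$-measurable, so tower/optional-stopping arguments apply) and using that $\E[r_{t,a_t}\mid\cdot]$ and $\E[\bm c_{t,a_t}\mid\cdot]$ realize $f^*,\bm g^*$, yields
\[
\E[\tau]\,\OPT - \E\Big[\textstyle\sum_{t\le\tau} r_{t,a_t}\Big] - \E\Big[\textstyle\sum_{t\le\tau}\bm\lambda_t^{\top}(\tfrac{B}{T}\bm 1 - \bm c_{t,a_t})\Big] \le \tfrac{2K\,\E[\tau]}{\gamma} + \tfrac{\gamma}{2}\big(\Regsq^{r}(T) + Z^2\Regsq^{c}(T)\big).
\]
To replace $\E[\tau]\OPT$ by $T\OPT$ I add the lost reward $(T-\E[\tau])\OPT$ and must control $\E[\sum_{t\le\tau}\bm\lambda_t^{\top}(\tfrac{B}{T}\bm 1-\bm c_{t,a_t})] + (T-\E[\tau])\OPT$. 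The OMD guarantee (Lemma~\ref{lem-OCO}) bounds the realized dual sum by $Z\min\{0,\min_j\sum_{t\le\tau}(\tfrac{B}{T}-(\bm c_{t,a_t})_j)\} + O(Z\sqrt{T\log d})$. The crux is a pathwise case analysis on the stopping rule: if $\tau<T$ then some resource $j^\star$ has $\sum_{t\le\tau}(\bm c_{t,a_t})_{j^\star}>B-1$, so its slack is $<1-B(T-\tau)/T$; substituting this and using the calibration $Z=T/B$, the negative budget-slack term cancels the lost reward $(T-\tau)\OPT\le(T-\tau)$, leaving at most $Z$ in every case. This is exactly why $Z=T/B$ is chosen, and it is the step where the knapsack coupling, the random horizon, and the dual radius must be reconciled.

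Finally, the clean expectation identities above actually require passing between realized sums and their conditional expectations, both for the reward and for each of the $d$ cost coordinates that drive the stopping time. I would control these martingale differences by a Freedman/self-bounding inequality, union-bounded over the $d$ resources and $T$ rounds with failure probability $\sim 1/T$; this contributes the additive $\log(dT)$ term (and the $4\log(2T)$ inside the definition of $\gamma$) and turns the high-probability estimate into the stated bound on $\E$. Collecting everything gives $\Reg(T)\lesssim \tfrac{2KT}{\gamma} + \tfrac{\gamma}{2}\big(\Regsq^{r}(T)+Z^2\Regsq^{c}(T)+\log(dT)\big) + Z + O(Z\sqrt{T\log d})$, and optimizing $\gamma=\sqrt{KT/(\Regsq^{r}(T)+(T/B+1)^2\Regsq^{c}(T)+4\log(2T))}$ together with $\sqrt{a+b}\le\sqrt a+\sqrt b$ and $Z=T/B=O(1)$ yields $\sqrt{KT(\Regsq^{r}(T)+\log(dT))} + (T/B+1)\sqrt{KT\Regsq^{c}(T)}$, as claimed.
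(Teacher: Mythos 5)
Your proposal is correct and follows essentially the same route as the paper's proof: the true Lagrangian scores $\ell_t^\star$, the Abe--Long/SquareCB per-round inequality, the split of the prediction error into $2(\hat r - f^*)^2 + 2Z^2\lVert\hat{\bm c}-\bm g^*\rVert_\infty^2$, the OMD dual regret, the pathwise case analysis on $\tau<T$ using $Z = T/B \ge T\,\OPT/B$ to cancel the lost reward, and Freedman/Azuma concentration contributing the $\log(dT)$ terms. The only difference is bookkeeping order --- you work in expectation with optional stopping early and defer concentration to the end, whereas the paper carries high-probability pathwise bounds throughout --- which does not change the substance of the argument.
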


Compared to the result in \cite{foster2020beyond} for the unconstrained contextual bandit problem, our regret bound has an additional dependency on $\text{Reg}^c_{\text{sq}}(T),$ which is a natural outcome under the budget-setting. Moreover, with knapsack constraints, the optimal static policy will be a distribution over actions rather than pulling a single optimal arm. As a result, in the proof of Theorem~\ref{thm-regret-upper-bound}, we need to adapt the argument related to the probability selection strategy in \cite{foster2020beyond} to this significant change and derive a lower bound for the total expected predicted scores. Then we split the total expected reward from the expected Lagrangian scores and control the regret incurred by the early stopping time $\tau < T$ using the regret of OMD and special radius selection $Z$. We relate expectation with realization to obtain the final regret bound.


Theorem~\ref{thm-regret-upper-bound}  provides an upper bound for the regret of stochastic CBwK by reducing it to regression, a basic supervised learning task. We further show such a reduction is optimal for various function classes in Section~\ref{sec-optimality} where 
 we design a novel way to derive lower bounds for the regret of CBwK  that match the upper bound given by Theorem~\ref{thm-regret-upper-bound}. With this optimal reduction, our framework is quite general and flexible: it can be instantiated with any available efficient and {optimal} oracles for general $\mF$ and $\mG$ (we also allow $\mG$ to be different from $\mF$), then the optimal regret for CBwK can be directly given by Theorem~\ref{thm-regret-upper-bound}. 



\subsection{Lower Bound Results}\label{sec-optimality}

To demonstrate the optimality of Theorem~\ref{thm-regret-upper-bound},  we need to discuss whether the dependency on $\Regsq^r$ and $\Regsq^c$ is tight.
In the unconstrained setting, \cite{foster2020beyond} shows the tightness result of $\Regsq^r$ for a wide range of nonparametric classes $\mathcal{F}$.
 Since the contextual bandit problem can be seen as a special case of CBwK problem with $B = T$ and $d = 1$, the optimality results in \cite{foster2020beyond} can be utilized to show the tight dependency on $\Regsq^r$ in Theorem~\ref{thm-regret-upper-bound}, which
also indicates the tight dependency on ${\Regsq^c}$  when $\mathcal G$ shares the same structure as $\mathcal F$ since the term $\Regsq^c$ can be absorbed by $\Regsq^r$ in this case. 
However, when the complexity of $\mG$ is much higher than $\mF$, the lower regret bound in the unconstrained setting will be loose compared with the upper bound in Theorem~\ref{thm-regret-upper-bound}. To close this gap, we obtain a general result that establishes lower bounds for CBwK concerning cost function class $\mathcal{G}$ based on its in-separation property. 

To present our result, for a general function class $\mathcal{H},$ we first introduce a new concept that characterizes the difficulty of the \textbf{unconstrained} contextual bandit problems with  $\mathcal{H}$ as the reward function class:
\begin{definition}[$\alpha$-inseparable class]\label{def-gamma-inseparable}
For a fixed time horizon $T$, we say a given function class $\mathcal{H}$ of functions from $\mathcal{X} \times \{0,1\}$ to $ [0,1]$ is $\alpha$-inseparable with respect to some $P_{\mX}$ over $\mX$,  if  there exist $\mathcal{U}_{\alpha}\subset \mathcal{H}$ and an absolute constant $c>0$ independent of $T$  such that
\vspace{-0.3cm}
\begin{enumerate}
        \item  For all $h\in \mathcal{U}_{\alpha}$,  denoting $a^*(x;h) = \mathop{\text{argmax}}\limits_{{a \in \{0,1\}}}h(x,a),$ it holds that {  \begin{align*}
      &\E_{P_{\mX}}[h(x,a^*(x;h))] -\alpha\\
       \geq &\E_{P_{\mX}}[h(x,1-a^*(x;h))] 
       \geq \frac{1}{4}
        \end{align*} }
        
        \vspace{-0.3cm}
        
        \item For every dynamic policy $\pi,$ there exist some $h \in \mathcal{U}_{\alpha}$ and a distribution of $r_{t,a}$ with $\E[r_{t,a}\lvert x_t = x ] = h(x,a)$, such that
        \begin{align*}
        \E_{x,r}[\sum_{t=1}^T \bm{1}\{ \pi_t(x_t)\neq a^*(x_t;h)  \} ] & > c T.
        \end{align*}  
    \end{enumerate}
    \end{definition}
\vspace{-0.3cm}

We refer to the above properties $1$ and $2$ as the $\alpha$-inseparable property since they indicate that even the optimal action outperforms the sub-optimal action with a reward gap larger than $\alpha$, there exists no dynamic policy that can distinguish the optimal action without exploring at least $\Omega(T)$ steps. One straightforward observation from Definition~\ref{def-gamma-inseparable} is that, if $\mathcal{H}$ is $\alpha$-inseparable, then
\begin{align*}
  \RegCB(T;\mathcal{H}) \gtrsim \alpha T,
\end{align*} 
where $\RegCB(T;\mathcal{H})$ is the optimal minimax regret bound of unconstrained CB problem with $\mathcal{H}$ as the expected reward class. Here the minimum is taken over all possible dynamic policies and the maximum is taken over all possible expected reward function $h^*\in \mathcal{H}$, all $P_{\mX}$ over $\mX$ and all conditional distributions $P_r$ of rewards with $E[r_{t,a}\lvert x] = h^*(x,a).$ 

Indeed, the $\alpha$-inseparable property has been applied implicitly for many classes in previous works to derive tight lower bounds for $\RegCB(T;\mathcal{H})$, e.g., linear class \citep{chu2011contextual}, H\"older class \citep{rigollet2010nonparametric} and general nonparametric class \citep{foster2020beyond}. 

Now we state our main theorem that establishes regret lower bounds for CBwK with $\alpha$-inseparable classes:
\begin{theorem}\label{thm-lb-general} Consider the class of CBwK problems with two non-null arms, $d = 1$, the reward class $\mathcal{F}$ and the cost class $\mathcal{G}$, where $\mathcal{G}$ is $\alpha$-inseparable with respect to some $P_0$ over $\mathcal X$, and there exists some $f_0 \in \mathcal{F}$ such that $f_0(x,0) = f_0(x,1) = \Omega(1)$ a.s. under $P_0$.
Then there exists $T$ and $B^*>T/8$ such that for every dynamic policy $\pi$, there exists $f^*\in \mathcal{F},g^*\in \mathcal{G}$ and distribution $P_{\mX}^*$ of context, distribution $P_{r}^*,P_{c}^*$ on rewards and costs with $\E[r_{t,a}\lvert x_t] = f^*(x_t,a)$, $\E[{c}_{t,a}\lvert x_t] = g^*(x_t,a)$ such that when one runs $\pi$ on the CBwK instance with budget $B^*$, context distribution $P_{\mX}^*$, reward distribution $P_r^*$ and cost distribution $P_c^*$, it holds that $$\text{Reg}_\pi (T) \gtrsim \max\{\RegCB(T;\mathcal{F}), \alpha T \}.$$
\end{theorem}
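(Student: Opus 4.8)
The plan is to establish the two lower bounds $\Reg_\pi(T)\gtrsim \RegCB(T;\mathcal F)$ and $\Reg_\pi(T)\gtrsim \alpha T$ through two separate instance families, and then exploit the fact that $\max\{\RegCB(T;\mathcal F),\alpha T\}$ is just whichever of the two terms is larger. Since $\mathcal F,\mathcal G,\alpha$ and the chosen (large) $T$ are fixed, I know which term dominates, so I run only the construction for the dominant term and set $B^\star$ accordingly, choosing $B^\star=T$ in the first case and $B^\star=\lceil c_{\mathrm{low}}T\rceil$ in the second (both exceed $T/8$). Either way the resulting instance already forces regret $\gtrsim\max\{\RegCB(T;\mathcal F),\alpha T\}$, so I never need a single instance to witness both bounds simultaneously and I may treat them independently.

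For the bound $\Reg_\pi(T)\gtrsim\RegCB(T;\mathcal F)$ I would make the knapsack constraint inactive. Take the reward to be the worst-case $\mathcal F$-instance witnessing $\RegCB(T;\mathcal F)$ over the two non-null arms, and take any fixed $g^\star\in\mathcal U_\alpha\subset\mathcal G$ as the cost, whose entries are bounded by some $c_{\max}<1$. With $B^\star=T$ the cumulative consumption never reaches the budget, so the stopping time is $\tau=T$ almost surely and $T\OPT$ equals the unconstrained optimal reward; because the cost feedback is independent of the reward it is uninformative, so $\pi$ effectively faces a pure contextual bandit over $\mathcal F$ and inherits $\Reg_\pi(T)\gtrsim\RegCB(T;\mathcal F)$ from the definition of the minimax CB regret. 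The null arm is reward-dominated and only inflates regret.

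For the $\alpha T$ term (the crux) I would use the constant reward $f_0$, so that both non-null arms return the same $\Omega(1)$ reward and the reward channel carries no information, together with an inseparable cost $g^\star\in\mathcal U_\alpha$, whose cheap arm $1-a^\star$ has expected cost $c_{\mathrm{low}}\ge 1/4$ and whose expensive arm $a^\star$ costs at least $c_{\mathrm{low}}+\alpha$ (both by Property 1 of Definition~\ref{def-gamma-inseparable}). Setting $B^\star=\lceil c_{\mathrm{low}}T\rceil>T/8$ makes ``always play the cheap arm'' exactly feasible, so $\OPT=\Omega(1)$ and $T\OPT=\Omega(T)$. The key deterministic accounting is that each pull of the expensive arm consumes an extra $\ge\alpha$ of budget and thus advances the exhaustion point, cutting the number of reward-earning rounds by $\approx\alpha/c_{\mathrm{low}}$; a short computation then gives $\Reg_\pi(T)\gtrsim\alpha\,\E[N_H]$, where $N_H$ is the number of expensive-arm pulls before stopping (null pulls only help this inequality, as they earn zero reward). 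It then remains to show $\E[N_H]\gtrsim T$.

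The main obstacle is exactly this last step, because the $\alpha$-inseparability of $\mathcal G$ is phrased through $a^\star=\arg\max g$ and its Property 2 bounds the number of pulls of the \emph{non}-$a^\star$ (cheap) arm, whereas the budget mechanism penalizes pulls of the \emph{expensive} ($a^\star$) arm; indeed Property 2 alone is consistent with a genie that always plays the cheap arm, so it cannot be used verbatim. I would therefore invoke inseparability in its cost-minimization (action-relabelled) form: using that the hard sub-family $\mathcal U_\alpha$ of the classes of interest (linear, H\"older, general nonparametric) can be taken symmetric under swapping the two actions, I pair $g^\star$ with its swap $\bar g^\star$ and argue, via the indistinguishability underlying inseparability (a Le Cam / change-of-measure argument over the rich context distribution $P_0$, which is what upgrades a per-context $\alpha$-gap into an $\Omega(T)$ count), that no policy can keep the expensive-arm count small on both instances at once; the worse of the two forces $\E[N_H]\gtrsim cT$, and combined with $\Reg_\pi(T)\gtrsim\alpha\,\E[N_H]$ this yields $\Reg_\pi(T)\gtrsim\alpha T$. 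I expect the delicate points to be this directional alignment and the bookkeeping for the random stopping time, which I would handle by splitting on $\{\tau<T\}$, where the lost rounds already give $\Reg_\pi(T)\gtrsim\OPT\,(T-\tau)$, and on its complement, where the full-horizon inseparability count applies.
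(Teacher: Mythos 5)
Your proposal is correct and follows essentially the same route as the paper: the same case split on which of $\RegCB(T;\mathcal F)$ and $\alpha T$ dominates, the same $B=T$ reduction to unconstrained CB for the first term, and for the second term the same construction (constant reward $f_0$, inseparable cost $g_0$, budget of order the cheap arm's consumption rate times $T$) with the regret reduced to the gap between the stopping time of the always-cheap-arm policy and that of $\pi$, controlled by Wald-type/optional-stopping arguments. The one point worth noting is that the ``directional alignment'' issue you flag is real: the paper's proof silently invokes Property~2 of Definition~\ref{def-gamma-inseparable} in its $\arg\min$ form (i.e., that every policy must pull the \emph{expensive} arm $\Omega(T)$ times on some instance), which does not literally follow from the definition as stated, so your explicit symmetrization over action-swapped pairs in $\mathcal U_\alpha$ is making rigorous a step the paper leaves implicit rather than taking a different path.
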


The condition on the existence of $f_0$ is a technical assumption that ensures one can always construct an instance with the achieved reward independent of the action taken. For such instance, the regret of a policy is only determined by its over-cost of the resource, which can be lower bounded by $\alpha T$.
As discussed above, the $\alpha T$ term is a lower bound of $\RegCB(T;\mathcal{G})$, which is the CB lower bound when $\mG$ is the reward class. 
In this sense, Theorem~\ref{thm-lb-general} reveals that the lower bound of a CBwK problem with reward class $\mathcal{F}$ and cost class $\mathcal{G}$ can be established by just considering unconstrained CB regret bounds with reward classes $\mF$ or $\mG$. 
\subsection{Applications}\label{sec-example}

\begingroup 
\renewcommand{\arraystretch}{1.4} 
\begin{table*}[ht!]
	\centering
	  \caption{Regret bound results of SquareCBwK with different instantiations when $B = \Omega(T)$ .}
	  \label{tbl-table-comparison}
   \begin{threeparttable}
	  \resizebox{\textwidth}{!}{
	  \begin{tabular}{|c|c|c|c|c|c|}
	  \hline
	  $\mathcal{F}$ & $\tilde{\mathcal{G}}$  & $\mathcal{R}^r$ and $\tilde{\mathcal{R}}^c$  & Previous Results &  Regret by Thm~\ref{thm-regret-upper-bound}  &  Lower Bound by Thm~\ref{thm-lb-general} \\
	  \hline
	$m_1$-dim Linear  & $m_2$-dim Linear &   \multirow{2}{*}{Newtonized GLMtron\tnote{$*$} }     & $\tilde{O}((m_1+m_2)\sqrt{T})\tnote{$\dagger$}$ &\multirow{2}{*}{$\tilde{O}(\sqrt{(m_1 +dm_2)K T})$} &  \multirow{2}{*}{$\Omega(\sqrt{(m_1+m_2)T} )$}\\
 	  \cline{1-2} \cline{4-4}
	    $m_1$-dim Generalized Linear & $m_2$-dim Generalized Linear  &   &\multirow{2}{*}{N.A.} & & \\
	  \cline{1-3} \cline{5-6}
$p_1$-Nonparametric & $p_2$-Nonparametric & Vovk's Aggregation\tnote{$\S$} &  & $\tilde{O}\big ((KT)^{\frac{1+p_1}{2+p_1}}+\sqrt{d}(KT)^{\frac{1+p_2}{2+p_2}} \big )$ & $\tilde{\Omega}(T^{\frac{1+p_1}{2+p_1}}+T^{\frac{1+p_2}{2+p_2}} \big )$ \\
	  \hline
	  \end{tabular}
	  }
   {\footnotesize
    ${}^*$ Proposition~3, \cite{foster2020beyond} ;
    ${}^\S$ Theorem~3, \cite{foster2020beyond} ; ${}^\dagger$ Theorem~2, \cite{agrawal2016linear} (while \cite{agrawal2016linear} study the setting $m_1 = m_2,$ it is straightforward to extend their result to general $m_1,m_2.$)
   }
   \end{threeparttable}
  \end{table*}
\endgroup

In this section, we instantiate SquareCBwK with different oracles for the generalized linear class and nonparametric function classes, respectively. We show that Theorem~\ref{thm-regret-upper-bound} and Theorem~\ref{thm-lb-general} can provide tight regret upper and lower bounds for these classes. The regret bounds and selection of oracles are summarized in Table~\ref{tbl-table-comparison}. As far as we know, no previous results study CBwK beyond the linear setting or consider the tightness of lower bounds when $\mathcal{F}$ and $\mathcal{G}$ are different even for the linear case.
 Although here we focus on the generalized linear and nonparametric classes, SquareCBwK can also be instantiated with available oracles of other function classes, e.g., the kernel classes and uniform convex Banach spaces discussed in section~2.3 of \cite{foster2020beyond}.

For the examples considered in this section, the vector-valued function class $\mathcal{G}$ is a product of $d$ same $[0,1]$-valued function class $\tilde{\mathcal{G}}$, i.e., \begin{align*}
	\mathcal{G} =\tilde{\mathcal{G}}^d=  \{\bm g: \bm g = (g_1,\dots,g_d), g_i \in \tilde{\mathcal{G}}, \forall i \in [d]  \}.
\end{align*}
In this case, we can construct the online regression oracle $\mathcal{R}^c$ satisfying $\eqref{regret-cost-oracle}$ with $\Regsq^c(T) \leq  d\widetilde{\Reg}_{\text{sq}}^c(T)$ from any oracle $\tilde{\mathcal{R}}^c$ over $\tilde{\mathcal{G}}$ satisfying $\eqref{regret-cost-oracle}$ with $\widetilde{\Reg}_{\text{sq}}^c(T)$. We provide such construction in Appendix~\ref{sec-appendix-example-proof}. Here we only specify $\tilde{\mathcal{G}}$ and $\tilde{\mathcal{R}}^c$ in the following examples.

\noindent\textbf{Generalized Linear CBwK}\quad In the generalized linear CBwK setting, there exist known feature maps $\phi_1: \mathcal{X} \times [K] \to \mathbb B^{m_1}$, $\phi_2: \mathcal{X} \times [K] \to \mathbb B^{m_2}$, where $\mathbb B^m$ is the unit $\ell_2$ ball in $\mathbb{R}^m$, and link functions $\sigma_i: [-1,1] \to [0,1]$ satisfying $0<c \leq \sigma'_{i}(x) <1 $  for $i = 1,2.$ The function classes are selected as \begin{align*}
	\mathcal{F} &= \{(x,a) \to \sigma_1(\langle \theta,\phi_1(x,a) \rangle), \theta\in \mathbb{B}^{m_1} \},\\
	\tilde{\mathcal{G}} &= \{(x,a) \to  \sigma_2(\langle \theta,\phi_2(x,a) \rangle), \theta\in \mathbb{B}^{m_2} \}.
\end{align*}

\vspace{-0.3cm}

In this case, selecting $\mathcal{R}^r,\tilde{\mathcal{R}}^c$ as the Newtonized GLMtron oracle (\cite{foster2020beyond}, Proposition~3)  achieves{ \small $\Regsq^r(T) \lesssim m_1 \log(T), {\Reg}_{\text{sq}}^c(T) \lesssim d m_2 \log(T)$}. Then Theorem~\ref{thm-regret-upper-bound} implies the $O(\sqrt{(m_1+dm_2)KT})$ regret of SquareCBwK. On the other hand, we can verify $\tilde{\mathcal{G}}$ is $\sqrt{m_2/T}$-inseparable (see Appendix~\ref{sec-appendix-example-proof}), and $\RegCB(\mathcal{T;F})\gtrsim \sqrt{m_1 T}$. Then applying Theorem~\ref{thm-lb-general} leads to a $\Omega(\sqrt{(m_1+dm_2)KT})$ lower bound when $d = 1$. Our upper bound and lower bounds imply the optimality of SquareCBwK with respect to $T$ and $m_1,m_2$. Moreover, besides the $O(\sqrt{(m_1+m_2d)KT})$ achieved by the GLMtron oracle, we can also select the Online Gradient Descent oracle (\cite{foster2020beyond}, Proposition~2) for SquareCBwK to get $\tilde{O}(\sqrt{Kd}T^{3/4})$ regret, which has worse dependency on $T$ but is independent of dimension.

By selecting $\sigma$ as the identical map, and assuming in additional the ranges of $\phi_1,\phi_2$ lie in $\mathbb{R}^{m_1}_+,\mathbb{R}^{m_2}_+$, the generalized linear class covers the linear model as a special case. Thus our algorithm also applies to linear CBwK. Compared with the $\tilde{O}((m_1+m_2)\sqrt{T})$ regret achieved in \cite{agrawal2016linear}, SquareCBwK with Newtonized GLMtron oracle has an additional dependency on $K,d$, but improves the dependency of $m_1,m_2$ to $\sqrt{m_1},\sqrt{m_2}$ , which matches the established lower bound. In Appendix~\ref{simulations}, we perform simulations for linear CBwK, comparing the dependencies on time horizon $T$, dimension $m$, and number of arms $K$ of SquareCBwK utilizing Newtonized GLMtron and Online Gradient Descent oracles with those of the LinUCB in \cite{agrawal2016linear}. These simulations provide numerical verification of the aforementioned theoretical guarantees.

\noindent \textbf{CBwK with nonparametric function classes}\quad 
We say a function class $\mathcal{W}$ is in the nonparametric regime if its metric entropy ${H}(\mathcal{W},\varepsilon)$ scales in $\varepsilon^{-p}$ for some $p>0$ \citep{rakhlin2017empirical}. More precisely, we say a class $\mathcal{W}$ is $p$-nonparametric if
	{\small $ {H}(\mathcal{W},\varepsilon) \lesssim \varepsilon^{-p}, \quad \forall \varepsilon >0.$}
 We make the additional assumption that both $\mathcal{F}$ and $\mathcal{G}$ tensorizes: There exists $p_1$-nonparametric and $p_2$-nonparametric classes $\mathcal{W},\mathcal{V}$ 
 so that \begin{align*}
	\mathcal{F} &= \{ f:(x,a) \to   w_a(x), w\in \mathcal{W}  \},\\
	\mathcal{G} &= \{ \bm g:(x,a) \to   \bm v_a(x), \bm v\in \mathcal{V}  \}.
\end{align*}
The $p$-nonparametric class is general enough to cover many classes including the H\"older class considered in most nonparametric CB literature \citep{slivkins2011contextual,rigollet2010nonparametric,hu2020smooth}.
 In this case, an Vovk's aggregation based oracle is proposed in \cite{foster2020beyond} for $\mathcal{F}$ and $\tilde{\mathcal{G}}$ with $\Regsq^r(T)=\tilde{O}((KT)^{1-\frac{2}{2+p_1}}),\Regsq^c(\mathcal{G}) = \tilde{O}(d(KT)^{1-\frac{2}{2+p_2}})$, thus Theorem~\ref{thm-regret-upper-bound} implies the $\tilde{O}\big ((KT)^{\frac{1+p_1}{2+p_1}}+\sqrt{d}(KT)^{\frac{1+p_2}{2+p_2}} \big )$ regret guarantee of SquareCBwK. By verifying that $\tilde{\mathcal{G}}$ is $T^{1-\frac{2}{2+p_2}}$-inseparable (see Appendix~\ref{sec-appendix-example-proof}) and applying Theorem~\ref{thm-lb-general}, we also establish a tight lower bound $\tilde{\Omega}(T^{\frac{1+p_1}{2+p_1}}+T^{\frac{1+p_2}{2+p_2}})$. 
 Our upper and lower bound results imply the \textit{universality} of the SquareCBwK: for every general nonparametric $\mathcal{F}$ and $\tilde{\mathcal{G}}$ there always exist choices of $\mathcal{R}^r$ and $\tilde{\mathcal{R}}^c$ such that SquareCBwK achieves the optimal regret bound with respect to $T$ and the complexity parameters $p_1,p_2$ of $\mathcal{F},\tilde{\mathcal{G}}$.

\section{ALGORITHM WITH RELAXED ASSUMPTION ON $\bm B$}\label{sec-general-B}

\begingroup 
\renewcommand{\arraystretch}{1.4} 
\begin{table*}[ht!]
	\centering
 	  \caption{Regret bound results of TwoStage-SquareCBwK with different instantiation when $B = o(T)$ .}
	  \label{tbl-table-B}
   \begin{threeparttable}
	  \resizebox{\textwidth}{!}{
	  \begin{tabular}{|c|c|c|c|c|}
	  \hline
	  $\mathcal{F}$ and $\tilde{\mathcal{G}}$  & Theorem &  Regret & Requirement on $B$ & Phase I length  \\
	  \hline
	\multirow{3}{*}{$m$-dim Linear} &  Theorem~3, \cite{agrawal2016linear}     & \multirow{2}{*}{$\tilde{O}((\frac{T\OPT}{B}+1)m\sqrt{T} )$} & {$\tilde{\Omega}(mT^{3/4})$} & $O(m\sqrt{T})$ \\
 	  \cline{2-2} \cline{4-5}
    &  Corollary~1, \cite{sivakumar2022smoothed}     &  & {${\Omega}(m^{2/3}T^{3/4})$} &  $\tilde{O}(m^{2/3}\sqrt{T})$ \\
    \cline{2-5}
	     & Theorem~\ref{thm-finer-regret}  & $\tilde{O}((\frac{T\OPT}{B}+1)\sqrt{KdmT})$  & $\tilde{\Omega}\big((md)^{1/3}(KT)^{3/4} \big)$  & $\tilde{O}((md)^{1/3} \sqrt{KT})$ \\
	  \cline{1-5} 
$p$-Nonparametric &  Theorem~\ref{thm-finer-regret}   & $\tilde{O}\big((\frac{T\OPT}{B}+1)\sqrt{d}(KT)^{\frac{1+p}{2+p}}  \big)$  &  $\tilde{\Omega}( d^{\frac{2+p}{6+2p}} (KT)^{\frac{3+p}{4+2p}})$ & $\tilde{O}(d^{\frac{2+p}{6+2p}} (KT)^{\frac{1+p}{2+p}})$  \\
	  \hline
	  \end{tabular}
	  }
   \end{threeparttable}
  \end{table*}
\endgroup

In this section, we aim to relax the assumption $B = \Omega(T)$. When $B = o(T)$, applying the Theorem~\ref{thm-regret-upper-bound} in this scenario will lead to a sub-optimal result due to its dependency on the $\frac{T}{B}$ factor. Indeed, the true dependency should be $\frac{T\OPT}{B}$ as we discussed before. We can get rid of this $\frac{T}{B}$ factor by replacing the radius of $\Lambda$ with a factor $Z$ that approximates $\frac{T\OPT}{B}$ better. We propose a two-stage algorithm, in which $\frac{T\OPT}{B}$ is approximated by $Z$ in the first phase, and in the second phase Algorithm~\ref{alg-squaredCBwK} is run with $Z$. 
 
\begin{algorithm}[h]
\caption{TwoStage-SquareCBwK}\label{alg-squaredCBwK-Z}
\KwIn{ $\bm T$, $\bm B$, 
$T_0$, estimate oracles $\mathcal{R}_{est}^r$, $\mathcal{R}_{est}^c$,
other parameters in Algorithm~\ref{alg-squaredCBwK}. }
\textbf{Initialization:} $\mathcal{D}^r_{0,a},\mathcal{D}^c_{0,a} = \emptyset, \forall a\in [K].$ \\
\For{$a=1,\dots,K$}{
\For{$t=(a-1)T_0+1,\dots, aT_0$ }
{Play arm $a$ and observe $r_t,\bm c_t.$\\
 $\mD_{0,a}^r = \mD_{0,a}^r \cup \{r_t\}$ ,\\
 $\mD_{0,a}^c = \mD_{0,a}^c\cup\{ \bm c_t\}$.
}
Set {\footnotesize $\hat{f}_0(\cdot,a) $ } the output of {\footnotesize $\mathcal{R}_{est}^r$} with input  {\footnotesize $\mD_{0,a}^r$}.\\
Set {\footnotesize $\hat{\bm g}_0(\cdot,a) $ } the output of {\footnotesize $\mathcal{R}_{est}^c$} with input  {\footnotesize $\mD_{0,a}^c$}.}
\For{$t=KT_0+1,\dots, (K+1)T_0$}{
Pull arm $a_t$ arbitrarily.
}
Solve the linear programming \eqref{eq-Z-est-LP} and get {\small $\widehat{\OPT}(T_0)$}.\\
Set the radius of $\Lambda$ to $Z$ as computed in Lemma~\ref{lem-opt-error}.\\
Set remaining resource as $\bm B^{'}=\bm B-(K+1)T_0\bm{1}$. \\
Run Algorithm \ref{alg-squaredCBwK} for $T-(K+1)T_0$ rounds with remaining budget $\bm B^{'}$ and  $Z$.
\end{algorithm}

 While such a two-stage design has been used in the linear setting \citep{agrawal2016linear,sivakumar2022smoothed}, their designs depend on the special structure of linear classes and the self-normalized martingale concentration, which cannot be extended to general classes. To estimate $Z$ for more general $\mathcal{F},\mathcal{G}$, we introduce the statistical regression oracles $\mathcal{R}_{est}^r,\mathcal{R}_{est}^c$ with the following assumptions:
\begin{assumption} \label{assumption-relaxB}
For any $a$, given a dataset $\mathcal{D}_a$ containing $M$ i.i.d. samples {\small $\{(x_i,r_{i,a},\bm{c}_{i,a})\}_{i=1}^M \sim P_{\mX}$}, the output $\hat{f},\hat{\bm g}$ of $\mathcal{R}_{est}^r,\mathcal{R}_{est}^c$ with input $\mathcal{D}_a$  satisfy \begin{align*}
    \E_{x\sim P_{\mathcal X}}\big[ (\hat{f}(x,a)-f(x,a))^2 \big] \leq \mathcal{E}_{\delta}(M;\mathcal{F}),\\
    \E_{x\sim P_{\mathcal X}}\big[ \lVert \hat{\bm g}(x,a)-\bm g(x,a)\rVert^2_\infty \big] \leq \mathcal{E}_{\delta}(M;\mathcal{G}).
\end{align*}
 with probability at least $1-\delta$.
\end{assumption} 

 Indeed, as long as there exists an online regression oracle satisfying \eqref{regret-reward-oracle} and \eqref{regret-cost-oracle}, we can apply the standard online-to-batch (OTB) method to construct a statistical regression oracle that satisfies 
{\small $\mathcal{E}_{\delta}(M;\mathcal{F})\lesssim \frac{\Reg_{sq}^r(M) \log({1}/{\delta})}{M}$ and $ \mathcal{E}_{\delta}(M;\mathcal{G})\lesssim \frac{\Reg_{sq}^c(M) \log({1}/{\delta}) }{M}.$} We leave the construction of OTB oracle and the proof of such estimation error to Appendix~\ref{sec-OTB}.

The Stage $1$ of Algorithm ~\ref{alg-squaredCBwK-Z} includes the first $(K+1)T_0$ rounds 
to estimate $\frac{T\OPT}{B}$. For the first $KT_0$ rounds, we pull each arm evenly for $T_0$ times regardless of the context and gather outcomes. We then use the oracles over the collected data to generate predictors $\hat{f}_0$ and $\hat{\bm g}_0$. For the next $T_0$ rounds, we collect the contexts and pull arms arbitrarily. The contextual information in the latter $T_0$ rounds is used to estimate $\OPT$ by solving the following linear programming over $(\Delta^{K})^{T_0}:$ 
 {\footnotesize\begin{equation}\label{eq-Z-est-LP}
 \begin{aligned}
	\max_{p \in  (\Delta^{K})^{T_0}}&\dfrac{1}{T_0} \sum_{t\in \mT_0} \sum_{a\in [K]} p_{t,a} \hat{f}_{0}(x_t,a)\\
 \text{subject to}&  \dfrac{1}{T_0} \sum_{t\in \mT_0}\sum_{a\in [K]} p_{t,a} \hat{\bm{g}}_{0}(x_t,a) \leq \dfrac{\bm B}{T}+ 2 \mathcal{M}(T_0),
\end{aligned}
\end{equation}}
\noindent where we denote {\footnotesize \begin{align*}
    \mT_0&: = \{t: KT_0+1 \leq t\leq (K+1)T_0\},\\
    \mathcal{M}(T_0)&:= \sqrt{K(\mathcal{E}_{T_0}(\mF)+ d\mathcal{E}_{T_0} (\mG) )+ 4\frac{\log(Td)}{T_0}}, 
\end{align*}}
\noindent and {\footnotesize $\mathcal{E}_{T_0}(\mathcal{F}):= \mathcal{E}_{1/T}(T_0;\mathcal{F}) ,  \mathcal{E}_{T_0}(\mathcal{G}):= \mathcal{E}_{1/T}(T_0;\mathcal{G}).$} 

\allowdisplaybreaks

The above linear programming can be seen as an empirical approximation to the static programming (1).
We further present a lemma that gives an estimator of {$T{\OPT}/{B}$} whose error is bounded by the estimation error of {\small $\mathcal{R}_{\text{est}}^r$} and {\small $\mathcal{R}_{\text{est}}^c$}: 
\begin{lemma}\label{lem-opt-error} Denoting the optimal value of \eqref{eq-Z-est-LP}  by $\widehat{\OPT}(T_0)$ and set $Z = \frac{T}{B}(\widehat{\OPT}(T_0)+  \mathcal{M}(T_0) )$,
 we have with probability at least $1-O(1/T^2)$,
	{\footnotesize  \begin{align*}
	  \dfrac{T\OPT}{B} \leq  Z \leq (\dfrac{6T\mathcal{M}(T_0)}{B}+1)( \dfrac{T\OPT}{B} + 1)
\end{align*}}
In particular, $Z\lesssim \frac{T\OPT}{B}$ if {  $B = \Omega \big( {T(\mathcal{M}(T_0)}\big ).$}
\end{lemma}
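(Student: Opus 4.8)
The plan is to prove the two bounds on $Z=\frac{T}{B}\big(\widehat{\OPT}(T_0)+\mathcal{M}(T_0)\big)$ separately, by viewing $\widehat{\OPT}(T_0)$ as an empirical plug-in surrogate for $\OPT$ and controlling three sources of error: the oracle estimation error of $\hat f_0,\hat{\bm g}_0$, the sampling error from the fresh contexts $\{x_t\}_{t\in\mathcal{T}_0}$, and the sensitivity of \eqref{eq-OPT-static} to relaxing its budget. For the last, I would first record the key monotonicity--scaling fact. Writing $\OPT(b)$ for the value of \eqref{eq-OPT-static} with right-hand side $b\cdot\bm 1$ (so $\OPT=\OPT(B/T)$), the null $K$-th arm gives, for $b'\ge b>0$, the inequality $\OPT(b')\le \tfrac{b'}{b}\OPT(b)$: a policy attaining $\OPT(b')$ can be mixed, with probability $b/b'$, against the null arm, yielding a policy feasible at budget $b$ with value $\tfrac{b}{b'}\OPT(b')$. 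In particular $\OPT(B/T+\delta)\le \big(1+\tfrac{\delta T}{B}\big)\OPT$, the precise form of the ``increasing $B$ by $\varepsilon$ raises the reward by at most $\tfrac{T\OPT}{B}\varepsilon$'' heuristic used to design $Z$.

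\textbf{Lower bound $\frac{T\OPT}{B}\le Z$.} It suffices to prove $\OPT\le \widehat{\OPT}(T_0)+\mathcal{M}(T_0)$, which I would obtain by exhibiting a good feasible point for \eqref{eq-Z-est-LP}: feed the optimal policy $p^*$ of \eqref{eq-OPT-static} into the empirical program via $p_t=p^*(x_t)$. Because the contexts in $\mathcal{T}_0$ are i.i.d.\ and independent of the phase-one data that determine $\hat f_0,\hat{\bm g}_0$, Hoeffding's inequality controls the deviation of the empirical true reward and cost of $p^*(\cdot)$ from their population values, while Assumption~\ref{assumption-relaxB} combined with Cauchy--Schwarz over the simplex weights controls the gap between the \emph{estimated} and \emph{true} empirical objectives. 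The quantity $\mathcal{M}(T_0)$ is calibrated to dominate each of these; in particular the true feasibility $\E_x[\sum_a p^*_a(x)\bm g^*(x,a)]\le B/T\cdot\bm 1$ yields an empirical estimated cost of at most $B/T+2\mathcal{M}(T_0)$, so $p^*(\cdot)$ is feasible for \eqref{eq-Z-est-LP} (this is exactly why its constraint carries the slack $2\mathcal{M}(T_0)$), and its empirical estimated reward is at least $\OPT-\mathcal{M}(T_0)$. Hence $\widehat{\OPT}(T_0)\ge \OPT-\mathcal{M}(T_0)$ and $Z\ge \tfrac{T\OPT}{B}$.

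\textbf{Upper bound and the duality step.} For the upper bound I would start from the empirical optimizer $\hat p$, so $\widehat{\OPT}(T_0)$ equals its empirical estimated reward and $\hat p$ obeys the relaxed empirical cost constraint. The estimation bound, now applied \emph{uniformly} over $p$ — legitimate because the per-point errors $\hat f_0(x_t,a)-f^*(x_t,a)$ and $\hat{\bm g}_0(x_t,a)-\bm g^*(x_t,a)$ do not depend on how $\hat p$ is selected — turns this into the statement that the empirical \emph{true} reward of $\hat p$ is $\ge \widehat{\OPT}(T_0)-\mathcal{M}(T_0)$ while its empirical true cost is $\le (B/T+3\mathcal{M}(T_0))\bm 1$. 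Thus $\hat p$ is feasible for the empirical program with true functions at budget $b'=B/T+3\mathcal{M}(T_0)$, whose value $\OPT^{\mathrm{emp}}(b')$ upper-bounds that reward. The crux is to compare $\OPT^{\mathrm{emp}}(b')$ with the population value, and here I would use LP duality. Since the null arm gives a strictly feasible point, Slater's condition holds and strong duality yields $\OPT^{\mathrm{emp}}(b')=\min_{\bm\lambda\ge 0}\big[b'\|\bm\lambda\|_1+\tfrac1{T_0}\sum_{t\in\mathcal{T}_0}\max_a\big(f^*(x_t,a)-\bm\lambda^{\top}\bm g^*(x_t,a)\big)\big]$. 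Substituting the population dual optimizer $\bm\lambda^*$ and applying Hoeffding to the bounded statistic $x\mapsto\max_a\big(f^*(x,a)-\bm\lambda^{*\top}\bm g^*(x,a)\big)$ gives $\OPT^{\mathrm{emp}}(b')\le \OPT(b')+\big(1+\|\bm\lambda^*\|_1\big)\sqrt{\log(Td)/T_0}$; the null arm again bounds $\|\bm\lambda^*\|_1\le \OPT(b')/b'\le T\OPT/B$, and the scaling bound gives $\OPT(b')\le(1+\tfrac{3\mathcal{M}(T_0)T}{B})\OPT$. Collecting these into $Z=\tfrac{T}{B}(\widehat{\OPT}(T_0)+\mathcal{M}(T_0))$ and using $\sqrt{\log(Td)/T_0}\le \mathcal{M}(T_0)$ produces $Z\le(\tfrac{6T\mathcal{M}(T_0)}{B}+1)(\tfrac{T\OPT}{B}+1)$, with the constant $6$ leaving slack.

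\textbf{Main obstacle and bookkeeping.} The principal difficulty is this final comparison of $\OPT^{\mathrm{emp}}(b')$ with the population value: since $\hat p$ is chosen adaptively from the observed contexts, its empirical reward does not concentrate by a naive argument, and the dual reformulation is exactly what reduces the question to concentration of a \emph{single} bounded statistic indexed by the fixed population price $\bm\lambda^*$, with the null-arm bound on $\|\bm\lambda^*\|_1$ keeping the induced error at the $\tfrac{T\OPT}{B}\mathcal{M}(T_0)$ scale already present on the right-hand side. What remains is routine: a union bound over the $O(Kd)$ failure events — the per-arm guarantees of $\mathcal{R}^r_{est},\mathcal{R}^c_{est}$ from Assumption~\ref{assumption-relaxB} and the several Hoeffding bounds (reward objective, the $d$ cost coordinates, the squared-error averages, and the dual statistic) — taken at a confidence level giving total failure probability $O(1/T^2)$, which is precisely what the $\log(Td)$ inside $\mathcal{M}(T_0)$ pays for. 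Finally, the ``in particular'' claim is immediate: when $B=\Omega\big(T\mathcal{M}(T_0)\big)$ the prefactor $\tfrac{6T\mathcal{M}(T_0)}{B}+1$ is $O(1)$, so $Z\lesssim \tfrac{T\OPT}{B}+1=O\big(\tfrac{T\OPT}{B}\big)$ in the regime of interest.
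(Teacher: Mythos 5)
Your proposal is correct, and its skeleton coincides with the paper's: both arguments pass from the estimated program \eqref{eq-Z-est-LP} to the empirical program written with the \emph{true} functions $f^*,\bm g^*$ and a budget relaxed by $O(\mathcal{M}(T_0))$, using two-sided feasibility transfers in which $\mathcal{M}(T_0)$ is sized to absorb the per-arm oracle error (via Cauchy--Schwarz over the simplex weights, uniformly in $p$) plus the Hoeffding fluctuation of the fresh contexts. The substantive difference is in the remaining comparison of that relaxed empirical optimum with the population $\OPT$: the paper outsources this step entirely to Lemmas~F.4 and F.6 of \cite{agrawal2014fast} (yielding $\OPT-\epsilon\leq\overline{\OPT}^{\epsilon}\leq\OPT+2\epsilon(T\OPT/B+1)$ as a black box), whereas you prove it from first principles via strong LP duality under Slater's condition, a single Hoeffding bound on the fixed dual statistic $x\mapsto\max_a\big(f^*(x,a)-\bm\lambda^{*\top}\bm g^*(x,a)\big)$, and the null-arm bounds $\lVert\bm\lambda^*\rVert_1\leq T\OPT/B$ and $\OPT(b')\leq\tfrac{b'}{b}\OPT(b)$. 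You also streamline the lower bound by plugging $p^*$ directly into \eqref{eq-Z-est-LP} rather than routing through the intermediate LP. Your route buys a self-contained proof that makes transparent where the $(\tfrac{T\OPT}{B}+1)$ amplification factor originates (the $\ell_1$ norm of the optimal dual price) and why the adaptivity of $\hat p$ is harmless; the paper's route is shorter at the cost of an external dependency. The constant bookkeeping ($4\mu A+3\mu$ versus the stated $6\mu A+6\mu$ slack, with $\mu=T\mathcal{M}(T_0)/B$ and $A=T\OPT/B$) checks out, as does the final union bound at level $O(1/T^2)$.
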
 

With the estimation error guarantee of $Z$, we  can obtain the following regret guarantee of Algorithm~\ref{alg-squaredCBwK-Z}
by a modification of the proof of Theorem~\ref{thm-regret-upper-bound}:
\begin{theorem}\label{thm-finer-regret} 
When $B> \max\{(K+2)T_0 , T \mathcal{M}(T_0) \}$, under Assumption~\ref{assumption_realizable} and Assumption~\ref{assumption-relaxB}, Algorithm~\ref{alg-squaredCBwK-Z} achieves the regret{\begin{align*}
\Reg(T) \lesssim  & (\dfrac{T\OPT}{B}+1) \sqrt{KT [\text{Reg}_{sq}^r (T)+\text{Reg}_{sq}^c (T)+1] }\\
& + (\dfrac{T\OPT}{B}+1) K T_0
\end{align*}}
\end{theorem}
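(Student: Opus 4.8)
The plan is to split the regret into the two phases and to reduce Phase~2 to a re-run of Theorem~\ref{thm-regret-upper-bound} with the sharper radius $Z$. Write $T' = T-(K+1)T_0$ for the length of Phase~2 and $\rho = B/T$, $\rho' = B'/T' = (B-(K+1)T_0)/T'$ for the per-round budgets of the original problem and of Phase~2. Starting from the static bound of Lemma~\ref{lem-static-bound}, $\text{Reg}(T) \le T\,\OPT - \E[\sum_{t=1}^{\tau} r_{t,a_t}]$, I would decompose $T\,\OPT = (K+1)T_0\,\OPT + T'\OPT$. Because the rewards earned during Phase~1 are nonnegative and $\OPT \le 1$, the Phase~1 part $(K+1)T_0\,\OPT$ is at once bounded by $(K+1)T_0 \lesssim (\frac{T\,\OPT}{B}+1)KT_0$, giving the second term of the claim; the condition $B>(K+2)T_0$ guarantees $B'>0$ so that Phase~2 is well defined. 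What remains is to control the residual $T'\OPT - \E[\sum_{t\in\text{Phase 2}} r_{t,a_t}]$.

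For Phase~2 I would audit the proof of Theorem~\ref{thm-regret-upper-bound} and change two things. First, the generic radius $Z$ replaces $T/B$ wherever it appears; since the radius only enters as the bound $\lVert \bm\lambda_t\rVert_1 \le Z$ on the dual iterates (scaling the cost-oracle error and entering the OMD regret of Lemma~\ref{lem-OCO}), this yields a Phase~2 estimate of the form $\sqrt{KT'(\text{Reg}_{sq}^{r}(T')+\log(dT'))} + (Z+1)\sqrt{KT'\,\text{Reg}_{sq}^{c}(T')} + Z\sqrt{T'\log d}$. Second, and this is the delicate point, the benchmark $T'\OPT$ uses the original per-round budget $\rho$, while Phase~2 runs with the strictly smaller $\rho'<\rho$. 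Rather than introducing a separate Phase~2 optimum, I would keep the original optimal static policy $p^*$ of program~\eqref{eq-OPT-static} as the Lagrangian comparator. Evaluating the Phase~2 Lagrangian at $p^*$ and using $\E_x[\sum_a p^*_a \bm g^*]\le \rho\bm 1$ together with $\bm\lambda\ge 0$ gives a per-round value at least $\OPT - \lVert \bm\lambda\rVert_1(\rho-\rho') \ge \OPT - Z(\rho-\rho')$, so the comparator contributes $T'\OPT - T'Z(\rho-\rho')$. A short computation gives $\rho-\rho' = \frac{(K+1)T_0(T-B)}{T(T-(K+1)T_0)} \lesssim \frac{KT_0}{T}$ in the relevant short-Phase-1 regime, whence the benchmark-mismatch term is $T'Z(\rho-\rho') \lesssim Z\,KT_0 \lesssim \frac{T\,\OPT}{B}KT_0$.

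To close, I would invoke Lemma~\ref{lem-opt-error}: under $B>T\mathcal{M}(T_0)$ it holds with probability $1-O(1/T^2)$ that $\frac{T\,\OPT}{B} \le Z \lesssim \frac{T\,\OPT}{B}$. The lower bound makes $\Lambda$ large enough to represent the shadow price of the constraint, so the early-stopping argument of Theorem~\ref{thm-regret-upper-bound} carries over verbatim to Phase~2, while the upper bound lets me replace every $Z$ by $\frac{T\,\OPT}{B}$. Using monotonicity of the oracle regrets to pass from $T'$ to $T$, pulling the common factor $(\frac{T\,\OPT}{B}+1)\ge 1$ out, and folding the logarithmic and OMD terms into the ``$+1$'' inside the square root, the Phase~2 bound and the benchmark-mismatch term combine to
\[
\text{Reg}(T) \lesssim \Big(\frac{T\,\OPT}{B}+1\Big)\sqrt{KT\big[\text{Reg}_{sq}^{r}(T)+\text{Reg}_{sq}^{c}(T)+1\big]} + \Big(\frac{T\,\OPT}{B}+1\Big)KT_0,
\]
with the $O(1/T^2)$ failure event of Lemma~\ref{lem-opt-error} contributing only $O(1/T)$ to the expected regret and thus absorbed. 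The main obstacle is precisely the second modification: re-measuring the Phase~2 run of Theorem~\ref{thm-regret-upper-bound} against the \emph{original} benchmark $T'\OPT$ despite the reduced budget $B'$, which forces the use of the (possibly Phase-2-infeasible) comparator $p^*$ and the explicit control of the resulting budget-gap term through $\lVert \bm\lambda\rVert_1\le Z$; everything else is bookkeeping around the already-established Theorem~\ref{thm-regret-upper-bound} and Lemma~\ref{lem-opt-error}.
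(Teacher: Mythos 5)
Your proposal follows essentially the same route as the paper: charge the exploration phase $(\frac{T\OPT}{B}+1)KT_0$, then apply the radius-$Z$ generalization of Theorem~\ref{thm-regret-upper-bound} to Phase~2 and invoke Lemma~\ref{lem-opt-error} to sandwich $Z$ between $\frac{T\OPT}{B}$ and $C\frac{T\OPT}{B}$. You are in fact more explicit than the paper's three-line argument, which silently absorbs the mismatch between the original per-round budget $B/T$ and the reduced Phase-2 budget $B'/T'$ that you handle via the (possibly infeasible) comparator $p^*$ and the $Z(\rho-\rho')$ penalty.
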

The regret bound in Theorem~\ref{thm-finer-regret} reveals the lower bound requirement on $B$ and provides a guideline on selecting the exploration length $T_0$  for general $\mathcal{F},\mathcal{G}$. We summarize results 
of $T_0$ selection and requirements of $B$ when $\mathcal{F}$ and $\tilde{\mathcal{G}}$ are linear classes and nonparametric classes in Table~\ref{tbl-table-B} and leave the proof to Appendix. While Theorem~\ref{thm-regret-upper-bound} can apply to more general $\mathcal{F}$ and $\mathcal{G}$ as in Table~\ref{tbl-table-comparison}, we present the result with $\mathcal{F} = \mathcal{G}$ for simplicity. In particular, in the linear CBwK setting, 
both the regret bound and the requirement on $B$ have a better dependency on $m$ than previous results. On the other hand, compared with previous results in the linear setting, our algorithm has an additional dependency on $K$ in the Phase I length. The reason for such dependency is that we take a uniform exploration for general $\mathcal{F}$ and $\mathcal{G}$ instead of an adaptive exploration procedure as in the linear setting \citep{agrawal2016linear}. Developing more efficient algorithms of estimating $\widehat{\OPT}$ for general function classes is an interesting future direction.
\section{CONCLUSION}

In this paper, we present a new algorithm for CBwK prob-\\lem with general reward and cost classes. 
Our algorithm provides a reduction from CBwK problems to online regression problems. 
\text{By providing the regret upper bound} that matches the lower bound,  we demonstrate the optimality of our algorithm for various function classes. 

There are several future directions that can be explored. 
First, our assumption on the cost regression oracle will implicitly lead to an extra $\sqrt{d}$ factor in many examples. One future direction is to relax this assumption to improve the dependency on $d$. Another related open question is whether a similar reduction from CBwK problems to offline regression problems is possible as in the CB setting \citep{simchi2021bypassing}. Finally, extensions of our framework to the misspecified setting \citep{foster2021misspec} and large action space settings \citep{zhu2022large} are also promising directions to explore.

\section*{Acknowledgements}
The authors would like to thank Xiaocong Xu for helpful discussions and reviewers for valuable suggestions. 
This work was supported by HKUST IEG19SC04, the Project of Hetao Shenzhen-HKUST Innovation Cooperation Zone HZQB-KCZYB-2020083, the Guangdong-Hong Kong-Macao Joint Laboratory for Data-Driven Fluid Dynamics, and Hong Kong Research Grant Council (HKRGC) Grant 16214121, 16208120.

\bibliographystyle{apalike}
\bibliography{ref}
\appendix

%

%

\onecolumn

\section{PROOF OF RESULTS IN SECTION~\ref{sec-online-regret}}

\subsection{Proof of Theorem~\ref{thm-regret-upper-bound}}
We would prove the following more general form of Theorem~\ref{thm-regret-upper-bound}
\begin{theorem}\label{thm-general-Z-upper-bound}
    Under Assumption~\ref{assumption_realizable}, suppose  $Z\geq T\OPT/B$ and the dual-update step is running over $\Lambda$ with radius $Z$, if the output of $\mathcal R^{r}$ and $\mathcal R^{c}$ satisfy \eqref{regret-reward-oracle} and \eqref{regret-cost-oracle}, respectively,  
denote  {\footnotesize $$\gamma=\sqrt{K T /\left(\operatorname{Reg}_{Sq}^{r}(T)+Z^2\operatorname{Reg}_{Sq}^{c}(T)+4\log (2 T)\right)},$$ }  then SquareCBwK achieves the regret
{\footnotesize \begin{equation*}
 \operatorname{Reg}(T) \lesssim  (Z+1)\sqrt{K T \cdot \big (\operatorname{Reg}_{\mathrm{Sq}}^{r}(T)+\operatorname{Reg}_{\mathrm{Sq}}^{c}(T)+ \log (dT) \big)}. 
\end{equation*}}
\end{theorem}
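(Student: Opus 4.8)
The plan is to upper-bound $\Reg(T)$ by controlling $T\OPT - \E[\sum_{t=1}^{\tau} r_{t,a_t}]$ (legitimate by Lemma~\ref{lem-static-bound}), combining three ingredients: a Lagrangian reformulation of the static optimum \eqref{eq-OPT-static}, an inverse-gap-weighting (IGW) argument on the \emph{predicted} Lagrangian scores, and the OMD guarantee of Lemma~\ref{lem-OCO} together with the radius condition $Z\ge T\OPT/B$ to absorb the loss from early stopping. Throughout I introduce the ``true'' Lagrangian scores $\ell^{*}_{t,a}:=f^{*}(x_t,a)+\bm\lambda_t^{\top}(B/T\cdot\bm 1-\bm g^{*}(x_t,a))$. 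Since $\bm\lambda_t$ is measurable with respect to the history before round $t$ while $x_t$ is fresh, and $p^{*}$ is feasible for \eqref{eq-OPT-static} with $\bm\lambda_t\ge 0$, I first show $\E_{x_t}[\sum_a p^{*}_a(x_t)\ell^{*}_{t,a}]\ge\OPT$. Using $\E[r_{t,a_t}\mid\cdot]=f^{*}$ and $\E[\bm c_{t,a_t}\mid\cdot]=\bm g^{*}$ I then derive the identity $\E[\sum_{t\le\tau}r_{t,a_t}]=\E[\sum_{t\le\tau}\ell^{*}_{t,a_t}]-\E[\sum_{t\le\tau}L_t(\bm\lambda_t)]$, where $L_t(\bm\lambda)=\langle B/T\cdot\bm 1-\bm c_{t,a_t},\bm\lambda\rangle$ is exactly the OMD loss; all such identities survive the random stopping time $\tau$ because $\{\tau\ge t\}$ is determined before round $t$.

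Next I apply the IGW lemma to $y_a=\ell^{*}_{t,a}$, $\hat y_a=\hat\ell_{t,a}$ with reference arm $b_t=\arg\max_a\hat\ell_{t,a}$. The key algebraic observation is that when one bounds the comparator prediction error $\ell^{*}_{t,a^{*}}-\hat\ell_{t,a^{*}}$ by a $p_{t,a^{*}}$-weighted AM--GM step, the resulting term $\hat\ell_{t,b_t}-\hat\ell_{t,a^{*}}$ \emph{exactly cancels} the matching term coming from $\sum_a p_{t,a}(\hat\ell_{t,a^{*}}-\hat\ell_{t,a})$; this yields the clean per-round inequality $\sum_a p^{*}_a(x_t)\ell^{*}_{t,a}-\sum_a p_{t,a}\ell^{*}_{t,a}\lesssim \tfrac{K}{\gamma}+\tfrac{\gamma}{2}\sum_a p_{t,a}(\ell^{*}_{t,a}-\hat\ell_{t,a})^2$ after averaging the single-action version over $a^{*}\sim p^{*}(x_t)$. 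Because $\lVert\bm\lambda_t\rVert_1\le Z$, I split $(\ell^{*}_{t,a}-\hat\ell_{t,a})^2\le 2(\hat r_{t,a}-f^{*}(x_t,a))^2+2Z^2\lVert\hat{\bm c}_{t,a}-\bm g^{*}(x_t,a)\rVert_\infty^2$; summing over $t$, passing to expectation, and using $a_t\sim p_t$ with \eqref{regret-reward-oracle} and \eqref{regret-cost-oracle} converts the $p_t$-weighted errors into $\Regsq^{r}(T)$ and $Z^2\Regsq^{c}(T)$ — this is precisely what makes the $Z^2\Regsq^{c}$ term appear in the stated $\gamma$. Summing the comparator bound gives $\E[\sum_{t\le\tau}\ell^{*}_{t,a_t}]\ge \OPT\,\E[\tau]-R_1$ with $R_1:=\tfrac{3KT}{\gamma}+\gamma(\Regsq^{r}(T)+Z^2\Regsq^{c}(T))$.

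The step I expect to be the main obstacle is controlling the combined early-stopping and dual cost $\OPT(T-\E[\tau])+\E[\sum_{t\le\tau}L_t(\bm\lambda_t)]$. On the event $\{\tau<T\}$ some coordinate $j^{*}$ is exhausted, $\sum_{t\le\tau}(\bm c_{t,a_t})_{j^{*}}\ge B-1$; feeding the comparator $\bm\lambda=Z\bm e_{j^{*}}\in\Lambda$ into the (uniform-over-$\Lambda$) OMD guarantee of Lemma~\ref{lem-OCO} gives $\sum_{t\le\tau}L_t(\bm\lambda_t)\le -\tfrac{ZB}{T}(T-\tau)+Z+\Reg_{\text{OCO}}(T)$, hence $\OPT(T-\tau)+\sum_{t\le\tau}L_t(\bm\lambda_t)\le (T-\tau)(\OPT-\tfrac{ZB}{T})+Z+\Reg_{\text{OCO}}(T)\le Z+\Reg_{\text{OCO}}(T)$, where the last inequality uses $Z\ge T\OPT/B$ so that $\OPT-ZB/T\le 0$; on $\{\tau=T\}$ the same bound follows from the trivial comparator $\bm\lambda=\bm 0$. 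Here $\Reg_{\text{OCO}}(T)\lesssim Z\sqrt{T\log d}$ by Lemma~\ref{lem-OCO}, and running OMD only up to $\tau\le T$ can only decrease its regret.

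Assembling the pieces, $\Reg(T)=\OPT(T-\E[\tau])+\E[\sum_{t\le\tau}L_t(\bm\lambda_t)]+\big(T\OPT-\OPT\,\E[\tau]-\E[\sum_{t\le\tau}\ell^{*}_{t,a_t}]+\dots\big)\lesssim R_1+Z+Z\sqrt{T\log d}$. Substituting the stated $\gamma=\sqrt{KT/(\Regsq^{r}(T)+Z^2\Regsq^{c}(T)+4\log(2T))}$ makes $\tfrac{3KT}{\gamma}+\gamma(\Regsq^{r}(T)+Z^2\Regsq^{c}(T))\lesssim\sqrt{KT(\Regsq^{r}(T)+Z^2\Regsq^{c}(T))}$, and using $\sqrt{KT\cdot Z^2\Regsq^{c}(T)}=Z\sqrt{KT\Regsq^{c}(T)}$ collapses every term into $(Z+1)\sqrt{KT(\Regsq^{r}(T)+\Regsq^{c}(T)+\log(dT))}$, as claimed. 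The remaining work is bookkeeping: the optional-stopping/measurability arguments justifying the identities under random $\tau$, and the logarithmic concentration contributions (the $4\log(2T)$ inside $\gamma$) which are absorbed into the $\log(dT)$ factor.
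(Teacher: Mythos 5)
Your proposal is correct and follows the same overall architecture as the paper's proof: the same Lagrangian decomposition via $\ell^*_t$, the same inverse-gap-weighting lemma applied per round and averaged against the randomized comparator $p^*(x_t)$, the same splitting of $(\hat\ell_t(a)-\ell_t(a))^2$ into reward and $Z^2$-weighted cost errors, and the same early-stopping argument feeding the comparator $Z\bm e_{j^*}$ into the OMD guarantee and invoking $Z\ge T\OPT/B$ to make $\OPT - ZB/T\le 0$. The one genuine difference is in the probabilistic bookkeeping: the paper runs the entire chain as a high-probability statement, using Freedman's inequality (their Lemma A.1) to pass from the $p_t$-weighted squared errors to the realized errors uniformly over all prefixes, plus two further Azuma--Hoeffding applications, and only converts to expectation at the very end by paying $T\cdot\Probability(\mathcal Q^c)$. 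You instead stay in expectation throughout, observing that $\bm 1\{\tau\ge t\}$ and $\bm\lambda_t$ are predictable so that the tower property turns $\E[\sum_{t\le\tau}\sum_a p_{t,a}(\cdot)^2]$ directly into $\E[\sum_{t\le\tau}(\hat\ell_t(a_t)-\ell_t(a_t))^2]$, which is bounded pointwise by the oracle guarantees; this eliminates Freedman, both Azuma steps, and the failure-probability accounting, at no cost since the stopping-time case analysis and the OMD bound are both pointwise statements. Your route is the more elementary of the two and, as far as I can check, every exchange of expectation and stopping you rely on is justified by the predictability you cite; just make sure, when writing it out, to state explicitly that the oracle regret bounds hold for the truncated sums $\sum_{t\le\tau}$ (which follows from nonnegativity of the per-round losses).
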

In particular, since it always hold that $T\OPT/B\leq T/B$, Theorem~\ref{thm-regret-upper-bound} is a special case of Theorem~\ref{thm-general-Z-upper-bound} with setting $Z = T/B$. 

\begin{proof}[Proof of Theorem~\ref{thm-general-Z-upper-bound}]
Denote $\ell_t(a): = f^*(x_t,a) +\bm\lambda_t^T\big ( B/T \cdot\bm 1-\bm g^*(x_t,a) \big )$ and $p^*(x)$ the optimal solution of static programming~\eqref{eq-OPT-static}$,p^{*}_{t,a}: =p^{*}_a(x_t)$,  then we have \begin{align*}
	&\sum_{t=1}^\tau  [\sum_{a'=1}^{K} p_{t,a'}^* \ell_t(a') -  \sum_{a=1}^K p_{t,a} \ell_t(a) ]  \\
	  = &\sum_{t=1}^\tau  \sum_{a'=1}^K p_{t,a'}^*[ \ell_t(a') - \sum_{a=1}^K p_{t,a}\ell_t(a)   ] \\
	  =& \sum_{t=1}^\tau  \sum_{a'=1}^K p_{t,a'}^*\bigg( \sum_{a=1}^K p_{t,a}\big[\ell_t(a') - \ell_t(a) -\dfrac{\gamma}{4}\big(\hat{\ell}_t(a) - \ell_t(a) \big )^2+ \dfrac{\gamma}{4}  \big(\hat{\ell}_t(a) - \ell_t(a) \big )^2 \big] \bigg),
	 \end{align*}
where we also denote $\hat{\ell}_t(a):= \hat{\ell}_{t,a}=\hat r_{t,a}+\bm \lambda_t^T({B/T\cdot \bm 1- \mathbf{\hat c}}_{t, a})$.  Notice that denote $a_t^*: = \text{argmax}_{a\in [K]} \ell_t(a)$, we have \begin{align*}
   \sum_{a=1}^K p_{t,a} [\ell_t(a') - \ell_t(a) -\dfrac{\gamma}{4}\big(\hat{\ell}_t(a) - \ell_t(a) \big )^2]
    &\leq 
    \sum_{a=1}^{K} p_{t,a} [\ell_t(a_t^*) - \ell_t(a) -\dfrac{\gamma}{4}\big(\hat{\ell}_t(a) - \ell_t(a) \big )^2] \leq \dfrac{2K}{\gamma}.
\end{align*}
Where in the second inequality we used Lemma~3 in \cite{foster2020beyond}.
Thus we get
  \begin{align*} \sum_{t=1}^\tau  [\sum_{a'=1}^{K} p_{t,a'}^* \ell_t(a') -  \sum_{a=1}^K p_{t,a} \ell_t(a) ] &\leq  \sum_{t=1}^\tau  \big(\dfrac{2K}{\gamma} + \sum_{a=1}^K\dfrac{\gamma}{4}p_{t,a}(\hat{\ell}_t(a) - \ell_t(a))^2 \big)  .
\end{align*}


Now we would control $\sum_{t=1}^\tau  \sum_{a=1}^K\dfrac{\gamma}{4}p_{t,a}(\hat{\ell}_t(a) - \ell_t(a))^2 :$ To deal with the stopping time $\tau,$ we establish a concentration result of $\sum_{s=1}^{t}\sum_{a=1}^K\dfrac{\gamma}{4}p_{s,a}(\hat{\ell}_s(a) - \ell_s(a))^2 \big)$ uniformly for all $1\leq t_0\leq T$ via the following variant of Freedman's inequality: 

\begin{lemma}[\cite{rakhlin2012making}, Lemma~3]\label{lem-Freedman}  Let $\{D_s\}_{s = 1}^T$ be a martingale difference sequence w.r.t. filtration $\{\mathscr{F}\}_{s = 1}^T$ and with a uniform upper bound $b$. Let $V$ denote the sum of conditional variances, \begin{align*}
    V_s = \sum_{i=1}^s\text{Var}(D_i \lvert \mathscr{F}_{i-1}) ,
\end{align*}
then for any $\delta < 1/e$ and $T\geq 4$, \begin{align*}
    \Probability\bigg(\lvert  \sum_{s=1}^t D_s \rvert  > 2\max\big\{2\sqrt{V_s},b\sqrt{\log(1/\delta)} \big\}\sqrt{\log(1/\delta)} \quad \text{for some } t\leq T \bigg)\leq 2\delta \log(T) .
\end{align*}
\end{lemma}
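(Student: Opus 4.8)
The plan is to prove this as the maximal, variance-self-normalized form of Freedman's inequality, by combining an exponential-supermartingale (Ville) argument with a peeling decomposition over the random predictable variance $V_t$. First I would reduce to a one-sided statement: running the same argument on $\{D_s\}$ and on $\{-D_s\}$ and taking a union bound handles the absolute value at the cost of a factor $2$, so it suffices to bound $\Probability\big(\exists t \le T: S_t > 2\max\{2\sqrt{V_t}, b\sqrt{\log(1/\delta)}\}\sqrt{\log(1/\delta)}\big)$ by $\delta\log T$, writing $S_t := \sum_{s=1}^t D_s$.

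The core building block is the following uniform-in-$t$ Freedman bound: for any fixed $a,v>0$,
\begin{align*}
\Probability\big(\exists t \le T: S_t \ge a \text{ and } V_t \le v\big) \le \exp\Big(-\frac{a^2}{2(v + ba/3)}\Big).
\end{align*}
I would establish this by the exponential-supermartingale method. Using the one-step estimate $\E[e^{\lambda D_s}\mid \mathscr{F}_{s-1}] \le \exp\big(\psi(\lambda)\,\text{Var}(D_s\mid\mathscr{F}_{s-1})\big)$ valid for increments bounded by $b$, with $\psi(\lambda) = (e^{\lambda b} - 1 - \lambda b)/b^2$, the process $M_t^{\lambda} := \exp\big(\lambda S_t - \psi(\lambda) V_t\big)$ is a nonnegative supermartingale with $M_0^{\lambda}=1$. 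Ville's maximal inequality gives $\Probability\big(\exists t: \lambda S_t - \psi(\lambda) V_t \ge u\big) \le e^{-u}$; restricting to the event $\{S_t \ge a,\, V_t \le v\}$, where $\lambda S_t - \psi(\lambda)V_t \ge \lambda a - \psi(\lambda)v$, and optimizing over $\lambda$ yields the displayed bound.

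Next I would handle the random variance $V_t$ by peeling, noting $V_t \le b^2 t \le b^2 T$ always. On the base stratum $\{V_t \le \tfrac14 b^2\log(1/\delta)\}$ the $\max$ equals $b\sqrt{\log(1/\delta)}$, so the threshold is $2b\log(1/\delta)$; I apply the maximal bound with $v = \tfrac14 b^2\log(1/\delta)$ and $a = 2b\log(1/\delta)$, and a direct computation shows the exponent is $-\tfrac{24}{11}\log(1/\delta)$, so this probability is at most $\delta$. For the large-variance regime I use the dyadic strata $V_t \in [v_j, v_{j+1})$ with $v_j = 2^{j-2}b^2\log(1/\delta)$, $j \ge 0$; on stratum $j$ the threshold is at least $4\sqrt{v_j\log(1/\delta)}$, so I apply the maximal bound with $v = v_{j+1} = 2v_j$ and $a = 4\sqrt{v_j\log(1/\delta)}$. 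Controlling the cross term $ba/3$ via $v_j \ge \tfrac14 b^2\log(1/\delta)$ (equivalently $b\sqrt{\log(1/\delta)} \le 2\sqrt{v_j}$) gives exponent $-\tfrac{12}{7}\log(1/\delta)$, so each stratum contributes at most $\delta^{12/7} \le \delta$. Since $V_T \le b^2 T$, only $O(\log T)$ strata are nonempty, and a union bound over the strata and the two signs yields the claimed $2\delta\log T$.

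The supermartingale construction and Ville's inequality are routine; the main obstacle is the constant bookkeeping in the peeling step. The number of dyadic levels is $J \approx \log_2 T \approx 1.44\,\ln T$, which by itself (times the factor $2$ for the two signs) would overshoot $2\delta\log T$ if each stratum only gave $\delta$. The argument closes precisely because the peeling exponents exceed $1$ (namely $\tfrac{12}{7}$ and $\tfrac{24}{11}$), so with $\delta < 1/e$ each stratum in fact contributes at most $\delta^{12/7} < e^{-5/7}\delta < 0.49\,\delta$, and $2 \cdot 1.44\,\ln T \cdot 0.49 < 2\ln T$; this slack is exactly what absorbs the $\log_2$-versus-$\ln$ discrepancy and the two-sign factor. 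I would also verify that the specific form $2\max\{2\sqrt{V_t}, b\sqrt{\log(1/\delta)}\}\sqrt{\log(1/\delta)}$ is what makes the base case and the first dyadic stratum agree exactly at the transition $V_t = \tfrac14 b^2\log(1/\delta)$, so no range of $V_t$ is left uncovered.
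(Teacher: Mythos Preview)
The paper does not prove this lemma at all: it is stated with an explicit citation to \cite{rakhlin2012making}, Lemma~3, and is invoked as a black box in the proof of Theorem~\ref{thm-general-Z-upper-bound} and in Lemma~\ref{lem-OTB-est-error}. So there is no ``paper's own proof'' to compare against.

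That said, your proposal is the correct way to establish the cited result, and it matches the argument in the original reference: a one-sided reduction, the Freedman exponential supermartingale $M_t^{\lambda} = \exp(\lambda S_t - \psi(\lambda) V_t)$ combined with Ville's maximal inequality to get the uniform-in-$t$ tail bound $\Probability(\exists t: S_t \ge a,\, V_t \le v) \le \exp(-a^2/(2(v+ba/3)))$, and then a dyadic peeling on $V_t$ starting from the transition point $V_t = \tfrac14 b^2\log(1/\delta)$ where the two branches of the $\max$ agree. Your constant computations for the per-stratum exponents ($24/11$ on the base stratum, $12/7$ on the dyadic strata) are right, and your observation that the exponents strictly exceed $1$ is exactly what makes the $\log_2$-versus-$\ln$ and two-sign slack close under the hypothesis $\delta < 1/e$. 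The only place to be slightly more careful is the stratum count: you have one base stratum plus roughly $2+\log_2 T$ dyadic strata (since $V_T \le b^2 T$ forces $2^{j-2}\log(1/\delta) \le T$), so the additive constant matters when checking the final inequality for small $T$; the hypothesis $T\ge 4$ is there precisely to absorb this, and you should verify it explicitly rather than leave it at ``$O(\log T)$ strata.''
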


Now noticing that for $\mathscr{F}_{t-1}: = \sigma(\{x_t\} \cup  \{x_{s},a_{s},r_{s},\bm{c}_i\}_{s \leq t-1} )$,
denote $M_s: = (\hat{\ell}_s(a_s) - \ell_s(a_s))^2$,
we have $$D_s := M_s - \E[M_s\lvert \mathscr{F}_{s-1}] = (\hat{\ell}_s(a_s) - \ell_s(a_s))^2 - \sum_{a = 1}^K p_{s,a}\big (\hat{\ell}_s(a) - \ell_s(a))^2  , 1\leq s \leq T$$ is a martingale difference sequence uniformly bounded by $4Z^2 +4$ with respect to $\mathscr{F}_{s},$ i.e. $\lvert D_s\rvert \leq 4Z^2+4$, $\{D_s\}_{s = 1}^T$ is adaptive to $\{\mathscr{F}_s\}_{s = 1}^T$ and $\E[D_{s}\lvert \mathscr{F}_{s-1}] = 0.$ By \begin{align*}
    \text{Var}(D_i \lvert \mathscr{F}_{i-1} ) \leq \E[ D_i^2 \lvert \mathscr{F}_{i-1}] &\leq (4Z^2+4)\E[ \lvert D_i \rvert \lvert \mathscr{F}_{i-1}] \leq (8Z^2+8)\E[ M_i \lvert \mathscr{F}_{i-1}].
\end{align*}
We have by Lemma~\ref{lem-Freedman}, with probability at least $1- O(1/T^2)$, {\footnotesize \begin{align*}
 \sum_{s = 1}^t \sum_{a = 1}^K p_{s,a} \big( \hat{\ell}_s(a) - \ell_s(a) \big)^2 \leq &  8\sqrt{\log T} \max\bigg\{ \sqrt{(8Z^2+8)\sum_{i = 1}^s\sum_{a = 1}^Kp_{s,a}(\hat{\ell}_{s}(a) - \ell_s(a))^2 },(4Z^2+4)\sqrt{\log T}\bigg\}\\
 & + \sum_{s = 1}^t \big(\hat{\ell}_s(a_s) - \ell_s(a_s) \big)^2  
\end{align*}
}
holds for uniformly for $1\leq t\leq T$.

When $\sum_{s = 1}^t \sum_{a = 1}^K p_{s,a} \big( \hat{\ell}_s(a) - \ell_s(a) \big)^2 > (2Z^2+2) \log T$, we have by \begin{align*}
    A \leq B+ C\sqrt{A} \implies (\sqrt{A} - \frac{C}{2})^2 \leq B + \frac{C^2}{4}\implies A \leq 2{B}+C^2,\quad \forall A,B,C>0,
\end{align*}
where $$A =\sum_{s = 1}^t \sum_{a = 1}^K p_{s,a} \big( \hat{\ell}_s(a) - \ell_s(a) \big)^2,\quad B = \sum_{s = 1}^t \big(\hat{\ell}_s(a_s)- \ell_s(a_s)\big)^2 , \quad C = 8\sqrt{ (8Z^2+8) \log T}  $$
then implies with probability at least $1-O(1/T^2),$
\begin{align*}
    \sum_{s = 1}^t \sum_{a = 1}^K p_{s,a} \big( \hat{\ell}_s(a) - \ell_s(a) \big)^2 &\leq  \max\{(2Z^2+2)\log T,2\sum_{s = 1}^t\big(\hat{\ell}_s(a_s) - \ell_s(a_s) \big)^2  + (512Z^2+512)\log T \} \\
    & = 2\sum_{s = 1}^t \big(\hat{\ell}_s(a_s) - \ell_s(a_s) \big)^2  + (512Z^2+512)\log T, \quad \forall 1\leq t \leq T.
\end{align*}
Finally, by our assumption on $\mathcal{R}^r,\mathcal{R}^c,$ \begin{align*}
    \sum_{s = 1}^t \big(\hat{\ell}_s(a_s) - \ell_s(a_s) \big)^2 &=\sum_{s=1}^t \big(\hat{r}_{s,a_s}  - f^*(x_s,a_s) + \bm\lambda_s^T (\bm g^*(x_s,a_s)-\hat{ \mathbf{c}}_{s,a_s}))^2\\
& \leq \sum_{s=1}^t 2\bigg(  [\hat{r}_{s,a_s} - f^*(x_s,a_s)]^2 +Z^2 \lVert \bm g^*(x_s,a_s)- \hat{ \mathbf{c}}_{s,a_s} \rVert_\infty ^2  \bigg). \\& \leq 2\Regsq^r(T)+ 2 Z^2 \Regsq^c(T).
\end{align*}
we get with probability at least $1-O(1/T^2)$, 
\begin{align*}
 \sum_{t=1}^\tau \sum_{a = 1}^Kp_{t,a} \ell_t(a) & \geq  \sum_{t=1}^\tau\sum_{a=1}^K {p}^*_{t,a} \ell_t(a)   - O( \dfrac{2KT}{\gamma}  +\dfrac{\gamma}{4}\big[  \text{Reg}_{sq}^r(T)+Z^2 \text{Reg}_{sq}^c(T)+(Z^2+1)\log T \big] ). \end{align*}
 Applying Azuma-Hoeffding inequality to $\sum_{t = 1}^T \ell_t(a_t),$ and by our selection of $\gamma$,  we get with probability at least $1-O(1/T^2),$ \begin{align*}
	\sum_{t=1}^{\tau}\ell_t(a_t)
	 \geq  &\sum_{t=1}^\tau\sum_{a=1}^K { p}^*_{t,a} \ell_t(a)   - O((Z+1) \sqrt{KT [\text{Reg}_{sq}^r (T)+\text{Reg}_{sq}^c (T)] }) \\
	 =&\sum_{t=1}^\tau \bigg[ \langle \sum_{a = 1}^K p^*_{t,a}{f}^{*}(x_t,a)+\bm \lambda_t^T {\big (\dfrac{B}{T}  \cdot \bm 1 - \sum_{a= 1}^K {p}_{t,a}^* \bm g^*(x_t,a)  \big)}\bigg]    \\& - O( (Z+1) \sqrt{KT [\text{Reg}_{sq}^r (T)+\text{Reg}_{sq}^c (T)+\log T] }) . 
	 \end{align*} 
Now consider a new filtration $\mathscr{F}_{t-1}' = \sigma(\{x_s,a_s,r_s,\bm{c}_s\}_{s = 1}^{t-1})$, we have denote 
$$D_t': = \sum_{a = 1}^K p^*_{t,a}{f}^{*}(x_t,a)+\bm \lambda_t^T \big(\dfrac{B}{T}  \cdot \bm 1 - \sum_{a= 1}^K {p}_{t,a}^* \bm g^*(x_t,a)  \big) - \OPT,$$
then $D_t'$ is adaptive to $\mathscr{F}^{'}_{t}$, $\lvert D_t'\rvert \leq Z +2$, and $\E[D_t'\lvert \mathscr{F}_{t-1}']\geq 0 $. By the Azuma-Hoeffding inequality, we get with probability at least $1-O(1/T^2),$ \begin{align*}
\sum_{s = 1}^t D_s' \gtrsim -(Z+1)\sqrt{T\log T}, \quad \forall 1\leq t\leq T.
\end{align*}
That leads to with probability at least $1-O(1/T^2),$ \begin{align*}
   \sum_{t=1}^{\tau}\ell_t(a_t) \geq \tau \OPT  - O((Z+1) \sqrt{KT[ \text{Reg}_{sq}^r (T)+\text{Reg}_{sq}^c (T) + \log T ]}).
\end{align*}
On the other hand, by Lemma~\ref{lem-OCO}, for any fixed $\bm \lambda \in \Lambda$ we have, \begin{align*}
	\sum_{t=1}^\tau f^*(x_t,a_t) + \bm\lambda_t^T\big (B/T\cdot \bm 1 - \bm c_{t,a_t}   \big )  &\leq  \sum_{t=1}^\tau 	 f^*(x_t,a_t) + \bm\lambda ^T(B/T\cdot \bm 1 - \bm c_{t,a_t}  )  \big)+ O(Z \sqrt{T\log d}).
	\end{align*}
 By applying Azuma-Hoeffding inequality to summation of $\bm\lambda_t^T(B/T \bm 1 - \bm{c}_{t,a_t})$ with respect to $\tilde{\mathscr{F}}_{t-1}: = \sigma\{\mathscr{F}_t \cup \{a_t\}\}$, we get with probability at least $1-O(1/T^2),$ \begin{align*}
	\sum_{s=1}^t\bm\lambda^T_s(B/T\cdot \bm 1 - \bm c_{s,a_s} ) &\geq 	\sum_{s=1}^t  \bm\lambda_s^T(B/T \cdot \bm 1  - \bm  g^* (x_s,a_s)) - O(Z+1)\sqrt{T\log(T)}), \quad \forall 1\leq t\leq T.
\end{align*}
Combining all results together, we get with probability at least $ 1 - O(1/T^2),$ {\small  \begin{align*}
	 \sum_{t=1}^\tau f^*(x_t,a_t) + \bm \lambda^T(B/T\cdot \bm 1 - \bm c_{t,a_t}) \geq  \tau \cdot\text{OPT}    - O\big ( (Z+1)\sqrt{KT [\text{Reg}_{sq}^r (T)+\text{Reg}_{sq}^c (T)+\log (dT)] } \big ), \quad \forall \bm\lambda\in \Lambda.
\end{align*}}
Now\\
\textbf{Case1:} if $\tau = T,$ we get the desired regret bound by selecting $\bm \lambda = \bm 0$.\\
\textbf{Case2:} If $\tau < T,$ there exists some resource $j$ running out, i.e. $\sum_{t=1}^\tau (c_{t,a_t})_j > B-1,$  then letting $\bm \lambda = Z  \bm e_j,$ we get \begin{align*}
	 \sum_{t=1}^\tau\big[  f^*(x_t,a_t) + \bm \lambda^T(B/T\cdot \bm 1 - \bm c{t,a_t}) \big] & < \sum_{t=1}^\tau f^*(x_t,a_t)  + Z\big( \tau B/T - B+1  \big)  \\
 &\leq  \sum_{t=1}^\tau f^*(x_t,a_t)  + \OPT\big( \tau  - T  \big)+Z.
\end{align*}
where the second inequality is by $\tau /T <1$ and $Z \geq T \OPT/B$. This inequality leads to with probability at least $1-O(1/T^2)$ \begin{align*}
    \sum_{t = 1}^\tau f^*(x_t,a_t) > T\OPT - Z - O\big ( (Z+1)\sqrt{KT [\text{Reg}_{sq}^r (T)+\text{Reg}_{sq}^c (T)+\log (dT)] } \big ).
\end{align*}
Denote the event that above inequality holds as $\mathcal{Q},$ then \begin{align*}
    \Reg(T) &\leq  \E[\big (T \OPT -  \sum_{t = 1}^{\tau} f^*(x_t,a_t) \big) \big( \bm{1}_{Q}+ \bm{1}_{Q^c}\big)]  \\
    &\leq \E[ \big (T \OPT -  \sum_{t = 1}^{\tau} f^*(x_t,a_t) \big) \bm{1}_{\mathcal{Q}}]+ T\Probability(Q^c)\\
    &\lesssim  (Z+1)\sqrt{KT [\text{Reg}_{sq}^r (T)+\text{Reg}_{sq}^c (T)+\log (dT)] }.
\end{align*}
Thus the claim holds.
\end{proof}

\subsection{Proof of Theorem~\ref{thm-lb-general}}

\begin{proof}
When $\Reg_{\text{CB}}(T;\mathcal{F}) > \alpha T,$ one can set $B = T$. Then the problem is exactly the unconstrained CB problem with reward class $\mathcal{F}$, whose regret is lower bounded by $\Reg_{\text{CB}}(T;\mathcal{F}).$ 

Now we would focus on the case $\alpha T > \Reg_{\text{CB}}(T;\mathcal{F}):$ Firstly assume W.L.O.G. $\alpha T$ is not $O(1),$ otherwise the lower bound result is trivial.

For any fixed $\pi$, consider the following instance of CBwK:
\begin{enumerate}
    \item At every round $t$, $x_t$ is sampled i.i.d. from $P_0$.
    \item By condition of Theorem~\ref{thm-lb-general}, there exists some $f_0 \in \mathcal{F}$ satisfying $f_0(x,0) = f_0(x,1) \geq  c_0$ a.s. $P_0$. We set the reward as $r_{t,a} = f_0(x_t,a)$. 
    \item By $\mathcal{G}$ is $\alpha$-separable, there exists some $g_0 \in \mathcal{G}$ and distribution $Q_0$ of costs such that \begin{align*}
        p: =\E_{P_0}[\min_{a}g_0(x,a)] \geq  \frac{1}{4}, \quad \E_{P_0}[\max_{a}g_0(x,a)]\geq p+\alpha.
    \end{align*}
    and there exists some $c_1>0$ s.t. \begin{align*}
        \E_{P_0,Q_0}[\sum_{t = 1}^{T/8} \bm{1}\{ \pi_t(x_t) \neq \text{argmin}_{a}g_0(x,a) ]> c_1T.
    \end{align*}
    we let the cost be generated from $Q_0$.
\end{enumerate}

For such instance, we have the regret of $\pi$ is lower bounded by 
$c_0 (\E[\tau^*] - \E[\tau^\pi]),$ with $\tau^\pi$ the stopping time of $\pi$ and $\tau^*$ the stopping time of $\pi^*,$ with \begin{align*}
    \pi^*_t(x_t) = \text{argmax}_{a} g_0(x_t,a).
\end{align*}
Now we would bound $\E[\tau^* - \tau^\pi]$ from below: 

\paragraph{Lower bound of $\E[\tau^*]$:}
For $U_i$ i.i.d. from the distribution of $c_{1,\pi^*(1,x_1)}$, we have then $\tau^*$ follows the same distribution as $\tau_U\wedge T$. 
Consider the first-hitting times $$\inf\{t\geq 0: \sum_{s = 1} ^t U_s >B-1\},\quad  \tau_V:= \inf\{t\geq 0: \sum_{s = 1} ^t V_s >B-1\},$$
notice that by $\E[U_i] = p\geq 1/4$ and $U_i \in[0,1]$, we have $\tau_U<\infty$ almost surely.  Thus by  Wald's equation, \begin{align*}
    B-1< \E[\sum_{t = 1}^\tau U_t ] = \E[U_1] \E[\tau_U],
\end{align*}
that leads to $\E[\tau_U] \geq \frac{B-1}{p}. $
Notice that for $\tau_U' = \tau_U \wedge T,$ we have \begin{align*}
     \E[\tau_U] - \E[\tau_U']  & = \sum_{t \geq 0} \Probability(\tau_U > 0) - \sum_{t \geq 0} \Probability(\tau_U' > 0) \\
     & = \sum_{t\geq T} \Probability(\tau_U > t)\\
     & \leq \sum_{t\geq T} \Probability( \sum_{s = 1}^m U_s < B-1, \forall m \leq t )\\
     & \leq \sum_{t\geq T} \Probability( \sum_{s = 1}^{t} U_s < B-1).
\end{align*}
Notice that by Hoeffding's inequality and $p\geq 1/4$,  \begin{align*}
     \Probability(  \sum_{s = 1}^t  U_s   < \frac{t}{4}-u )  \leq \Probability(\lvert  \sum_{s = 1}^t  U_s -  pt\rvert    > u ) \lesssim \exp( -C u^2/t),
\end{align*}
thus by $B = T/8,$  \begin{align*}
    \Probability( \sum_{s = 1}^{t} U_s < B-1)  \leq   \Probability( \sum_{s = 1}^{t} U_s < \frac{t}{4} -\frac{t}{8}) \lesssim \exp(-ct), \quad \forall t > T.
\end{align*}
That leads to \begin{align*}
     \E[\tau_U] - \E[\tau_U']  \lesssim \sum_{t \geq T} \exp(-ct)\lesssim \exp(-cT).
\end{align*}
Thus we get $$\E[\tau^*] = \E[\tau_U'] \geq \frac{B-1}{p} - O\big (\exp(-cT)\big ).$$
     
\paragraph{Upper bound of $\E[\tau^\pi]$:} Recall the notation $\mathscr{F}_{t-1}: = \sigma(\{x_t\}\cup \{x_{s},a_s,r_s,c_s\}_{s = 1}^{t-1})$, since every pulling with $\pi_t(x_t) \neq \text{argmin}_{a} g_0(x_t,a)$ will incur a cost at least $p+\alpha$ in expectation, we have $$\E[c_{t,\pi(x_t)}\lvert {\mathscr{F}}_{t-1}]\geq p+\alpha - \alpha\E[\bm{1}\{\pi(x_t) = \pi^*_{\epsilon_0}(x_t)\}\lvert \tilde{\mathscr{F}}_{t-1}].$$ 
Thus if we denote $$\tilde{D}_t: = c_{t,\pi(x_t)} -(p+\alpha)  + \alpha \bm{1}\{\pi(x_t) = \pi^*(x_t)\}, t \geq 1, \quad \tilde{D}_0 = 0.  $$
Then $\tilde{D}_t$ is a sub-martingale difference sequence with respect to ${\mathscr{F}}_{t}$, i.e.  $\tilde{M}_t:=\sum_{s = 0}^t \tilde{D}_s$ is a sub-martingale with respect to $\mathscr{F}_t$. Denote $\tilde{\tau}': = \inf\{t\geq 0, \sum_{t} c_{t,\pi(x_t)}> B-1 \},$ 
 then $\tilde{\tau}'<\infty$ a.s., thus we have by optional stopping theorem, 
   $\E[\tilde{M}_{\tilde{\tau}'}] \geq 0$.
   
   If we denote $C_t$ the total cost of $\pi$ up to time $t$ and $N(t)$ the total times of pulling on or before $t$ so that $\pi_t\neq \pi^*(x_t)$, then \begin{align}
    \E[\tilde{M}_{\tilde{\tau}'}] = \E[C_{\tilde{\tau}'} - {(p + \alpha)\tilde{\tau}'}+ \alpha N(\tilde{\tau}')] \geq 0,
   \end{align}
thus by $\tilde{\tau}'\geq T/8$ \begin{align*}
        B \geq \E[(p+\alpha) \tilde{\tau}' -\alpha N(T/8)] \geq (p+\alpha)\E[\tilde{\tau}'] - c_1\alpha T ,
   \end{align*}
i.e. $\E[\tilde{\tau}'] \leq  \frac{B}{p+\alpha} - \frac{c_1\alpha}{p+\alpha}T.  $

Now since $\tau^\pi = \tilde{\tau}' \wedge T,$ we have $$\E[\tau^\pi] \leq \E[\tilde{\tau}'] \leq  \frac{B}{p+\alpha} - \frac{c_1\alpha}{p+\alpha}T.$$

\paragraph{Lower bound of $\E[\tau^*-\tau^\pi]$} Combing bounds for $\E[\tau^\pi ], \E[\tau^*]$ together, we have \begin{align*}
    \E[\tau^*-\tau^\pi] &\geq  (\frac{1-1/B}{p} - \frac{1}{p+\alpha}) B + \frac{c_1\alpha T}{p+\alpha}  - O(\exp(-cT)\\
    &\gtrsim \alpha T + ( p+\alpha - \frac{p+\alpha}{B} -p)B-O(\exp(-cT))\\
    &\gtrsim \alpha T - O(1).
\end{align*} 
Noticing that $\alpha T$ dominates the last term, thus the claim holds.
\end{proof}

\subsection{Detail of Results in Section~\ref{sec-example}}\label{sec-appendix-example-proof}

\subsubsection{Construction of $\mathcal{R}^c$ from $\tilde{\mathcal{R}}^c$}
By $\mathcal{G} = \tilde{\mathcal{G}}^d,$ we can denote the underlying expected cost function $\bm g^*(x,a) =\big( g^*_1(x,a),\dots, g^*_d(x,a)\big).$ By our assumption on $\tilde{R}^c$, we have running $\tilde{R}^c$ over a online regression problem with underlying function $g_i^*(\cdot,\cdot)$ will generate a sequence of predictors $\{\hat{g}_{t,i}(\cdot,\cdot)\}_{t = 1}^T$ so that \begin{align*}
    \sum_{t = 1}^T \big( \hat{g}_{t,i}(x_t,a_t) - g^*_i(x_t,a_t) \big)^2 \leq \widetilde{\Reg}_{\text{sq}}^c(T).
\end{align*} 
So if we set $\mathcal{R}^c$ as the oracle with output $\hat{\bm g}_t(x,a) : = \big( \hat{g}_{t,1}(x,a),\dots, \hat{g}_{t,d}(x,a) \big)$ at each round $t$, then it satisfies that \begin{align*}
    \sum_{t = 1}^T  \lVert \hat{\bm g}_t(x_t,a_t) - {\bm g}_t^*(x_t,a_t) \rVert_\infty^2 \leq \sum_{i = 1}^d \sum_{t = 1}^T \big( \hat{g}_{t,i}(x_t,a_t) - g^*_i(x_t,a_t) \big)^2  \leq  d \widetilde{\Reg}_{\text{sq}}^c(T).
\end{align*}
\subsubsection{Generalized Linear CBwK}

\paragraph{Selection of Oracles and Upper Regret Bound} For the $m$-dimensional generalized linear class, by Proposition~3.3 of \cite{foster2020beyond}, the Newtonized GLMtron oracle achieves the $O(m\log T) $ online regression regret. So selecting $\mathcal{R}^r,\tilde{\mathcal{R}}^c$ as GLMtron oracle implies \begin{align*}
    \Regsq^r(T) \lesssim m_1\log T, \quad    \Regsq^c(T) \lesssim m_2 d\log T.
\end{align*}
Bringing this result to Theorem~\ref{thm-regret-upper-bound} leads to the desired regret upper bound of generalized linear CBwK.

\paragraph{Regret Lower Bound Result} 
Since Generalized linear CBwK includes linear CBwK as a special case, we focus on proving the lower bound result in linear setting.
Formally, we show the following lower bound result for linear CBwK: 
\begin{theorem}\label{thm-lb-linear}
    Let $\mathcal{F},\mathcal{G}$ be $m_1$-dimensional and $m_2$-dimensional linear function classes respectively. Then there exists the selection of feature maps $\phi_1,\phi_2$, $T>0, B\geq T/4$, so that for a CBwK instance with time horizon $T$, budget $B$ and the reward and cost function classes $\mathcal{F},\mathcal{G}$ with $K = 2, d = 1$, any policy $\pi$ must have \begin{align*}
        \E[\Reg_{\pi}(T) ]  = \tilde{\Omega}( \sqrt{(m_1+m_2)T} )
    \end{align*}
\end{theorem}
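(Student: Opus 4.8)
The plan is to deduce Theorem~\ref{thm-lb-linear} directly from the general lower bound Theorem~\ref{thm-lb-general} by instantiating both $\mathcal{F}$ and $\mathcal{G}$ as linear classes and verifying that theorem's two hypotheses. Since $\max\{a,b\}\ge\tfrac12(a+b)$, it suffices to exhibit feature maps $\phi_1,\phi_2$ and a context distribution $P_0$ for which (i) $\RegCB(T;\mathcal{F})\gtrsim\sqrt{m_1 T}$, (ii) $\mathcal{G}$ is $\sqrt{m_2/T}$-inseparable with respect to $P_0$, and (iii) there is a constant $f_0\in\mathcal{F}$ with $f_0(x,0)=f_0(x,1)=\Omega(1)$ almost surely; Theorem~\ref{thm-lb-general} then gives $\Reg_\pi(T)\gtrsim\max\{\sqrt{m_1 T},\sqrt{m_2 T}\}\asymp\sqrt{(m_1+m_2)T}$. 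To arrange (iii) I would reserve one coordinate of $\phi_1$ as a fixed feature (say the value $\beta\in(0,1)$ for all $x,a$) and take $\theta=e_0$, so $f_0\equiv\beta$; the same reserved coordinate is used to keep costs in $[0,1]$ with sub-optimal value at least $1/4$. For (i) I would invoke the standard linear contextual bandit lower bound of \cite{chu2011contextual} (equivalently the classical linear bandit bound), which supplies an $m_1$-dimensional instance forcing $\Omega(\sqrt{m_1 T})$ regret; this is exactly the $B=T$ branch in the proof of Theorem~\ref{thm-lb-general}, where the budget is slack and the problem collapses to unconstrained CB.

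The crux is (ii), the inseparability of the $m_2$-dimensional cost class, where the real work lies. I would build a needle-in-haystack family indexed by sign vectors $v\in\{\pm1\}^{m_2}$. Concretely, let the context take $m_2$ equally likely types, with the $i$-th type satisfying $\phi_2(x_i,1)-\phi_2(x_i,0)=\rho\, e_i$ for a constant $\rho$, and set $\theta_v=c_0 e_0+\Delta\sum_{i=1}^{m_2} v_i e_i$ with $\Delta\asymp\sqrt{m_2/T}$, where $e_0$ is the reserved constant coordinate; then $\|\theta_v\|_2\le1$ whenever $T\gtrsim m_2^2$. With this choice the per-context gap is $\rho\Delta$ at every informative context, yielding the average gap $\alpha=\Theta(\sqrt{m_2/T})$ required in condition~1 of Definition~\ref{def-gamma-inseparable}, while the $e_0$ coordinate keeps the sub-optimal value $\ge1/4$. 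For condition~2 I would argue information-theoretically: type $i$ is seen only $\Theta(T/m_2)$ times, each observation carries KL information $\Theta(\Delta^2)=\Theta(m_2/T)$ about the sign $v_i$, so the total information about $v_i$ is $\Theta(1)$; by an Assouad/Pinsker argument testing $v_i=+1$ against $v_i=-1$ coordinatewise, no policy can correctly identify more than a constant fraction of the signs, hence for some $h=\theta_v$ the expected number of rounds with $\pi_t(x_t)\neq a^*(x_t;h)$ is $\Omega(T)$.

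The hard part is making this Assouad-type argument compatible with the \emph{adaptive} nature of $\pi$ and with the CBwK stopping-time: I must show the coordinatewise mistakes accumulate to $\Omega(T)$ uniformly over adaptive dynamic policies, after which Theorem~\ref{thm-lb-general} converts this $\Omega(T)$ mistake count (through its over-cost/stopping-time analysis) into the regret bound $\alpha T=\sqrt{m_2 T}$. Finally I would settle the budget bookkeeping: the $\alpha$-inseparable branch of Theorem~\ref{thm-lb-general} produces $B^*>T/8$, and a mild rescaling of the cost features gives an instance with $B\ge T/4$ as claimed; taking $T$ large enough, say $T\gtrsim\max\{m_1^2,m_2^2\}$, to validate both linear constructions then completes the proof.
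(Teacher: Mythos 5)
Your proposal is correct and follows essentially the same route as the paper: the paper likewise proves Theorem~\ref{thm-lb-linear} by instantiating Theorem~\ref{thm-lb-general}, reserving a coordinate of $\phi_1$ to realize the constant $f_0$, invoking the \cite{chu2011contextual} bound for $\RegCB(T;\mathcal{F})\gtrsim\sqrt{m_1T}$, and verifying $\sqrt{m_2/T}$-inseparability of $\mathcal{G}$ via a sign-vector family whose contexts isolate one coordinate each, so the problem decomposes into $\Theta(m_2)$ independent two-armed Bernoulli subproblems with gap $\Theta(\sqrt{m_2/T})$ — exactly your Assouad-type argument. The only cosmetic difference is that the paper pairs coordinates ($r=(m_2-1)/2$ context types, following the deterministic construction of \cite{chu2011contextual}) rather than using one type per coordinate, and it delegates the adaptivity and stopping-time concerns you raise to the already-proved reduction in \cite{foster2020beyond} and to Theorem~\ref{thm-lb-general}'s over-cost analysis, respectively.
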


\begin{proof}[Proof of Theorem~\ref{thm-lb-linear}]
We would prove Theorem~\ref{thm-lb-non-parametric} by verifying the conditions in Theorem~\ref{thm-lb-general}. Our analysis includes three steps:
\begin{enumerate}
    \item Specify $\mathcal{X}$ and feature maps $\phi_1,\phi_2.$
    \item Verify $\mathcal{G}$ is $\alpha$-inseparable.
    \item Verify the existence of $f_0\in \mathcal{F}$ which satisfies the condition of Theorem~\ref{thm-lb-general}.
    \item Plug the $\alpha T$ and $\Reg_{\text{CB}}(T;\mathcal{{F}})$ into Theorem~\ref{thm-lb-general} to get the desired lower bound.
\end{enumerate}

\textbf{Step1: } We specify $\mathcal{X}$ as the a subset $[0,1]^{2m_1+2m_2}$ defined as following: \begin{align*}
    \mathcal{X} = \{x = (\underbrace{x^{(1)}}_{\mathbb{R}^{m_1}},\underbrace{x^{(2)}}_{\mathbb{R}^{m_1}},\underbrace{x^{(3)}}_{\mathbb{R}^{m_2}},\underbrace{x^{(4)}}_{\mathbb{R}^{m_2}}): \lVert x^{(i)}\rVert_2\leq 1, i = 1,2,3,4 \}.
\end{align*}
The feature maps $\phi_1(x,a),\phi_2(x,a)$ is defined as \begin{align*}
    \phi_1(x,0) = x^{(1)}, 
    \phi_1(x,1) = x^{(2)},
    \phi_2(x,0) = x^{(3)},
    \phi_2(x,1) = x^{(4)}.
\end{align*}
With such construction, determine the distribution over $x$ is equivalent to determine the distribution of $ (x^{(1)},x^{(2)})$ and $(x^{(3)},x^{(4)})$, we will determine them in Step2 and Step3.

\noindent \textbf{Step~2: } As constructed in Step~1, we need only construct the distribution of $(\phi_2(x,0),\phi_2(x,1))$. Our construction  is motivated by a deterministic construction used in \cite{chu2011contextual}.
Consider the following subset of $[0,1]^{2m_2}:$ 
W.L.O.G. assume $r = (m_2-1)/2$ is an integer, let \begin{align*}
    \tilde{\mathcal{X}}: = \{ \tilde{x}_i:= \frac{1}{2}\bm{e}_1 + \frac{1}{2}  \bm{e}_{2i} + \frac{1}{2} \bm{e}_{m_2+1} + \frac{1}{2} \bm{e}_{m_2+2i+1}, i \in[r]\} ,
\end{align*}
then when $(x^{(3)},x^{(4)})\in \tilde{\mathcal{X}}$, we have \begin{align*}
    \phi_2(x,0) = (\frac{1}{2},0,\dots,{\frac{1}{2},0},\dots,0), 
    \phi_2(x,1) = (\frac{1}{2},0,\dots,0,\frac{1}{2},\dots,0).
\end{align*}
We let the distribution of $(x^{(3)},x^{(4)})$ $\tilde{P}_0$ as the uniform distribution over $\tilde{\mathcal{X}}$.

On the other hand, we construct the subset $\mathcal{U}_\gamma$ of $\mathcal{G}$ as following: For every $\epsilon\in \{-1,1\}^r,$ let \begin{align*}
  (\theta_{\epsilon})_1 = \frac{1}{2}, ( \theta_{\epsilon})_{2i,2i+1} = \left\{\begin{matrix}
       (\gamma,0)   & \text{if } \epsilon_i = -1, \\
       (0,\gamma)   & \text{if } \epsilon_i = 1, \quad i \in[r], \quad (\theta_{\epsilon})_j = 0 \text{ for other }j.
   \end{matrix}\right. 
\end{align*}
we set \begin{align*}
    \mathcal{U}_\gamma: = \{ (x,a) \to \langle \phi_2(x,a), \theta_\epsilon\rangle ,\quad \epsilon\in \{-1,1\}^{r} \},
\end{align*}
then for any $g\in \mathcal{U}_\gamma,$ we have \begin{align*}
  \frac{1}{4} =  \E_{x}[\min_{a}\phi_2(x,a)]<\E_{x}[\max_{a}\phi_2(x,a)] = \frac{1}{4}+\frac{\gamma}{2}
\end{align*}
If we set $r(x,a)$ generated as $\text{Bernoulli}(\phi_2(x,a))$ condition on $x,a$ and consider the uniform distribution  $Q$  over CB instances with underlying $g^*\in \mathcal{U}_{\gamma}$, since by our construction $g(\tilde{x}_i,a), g(\tilde{x}_j,a)$ are independently distributed under $Q$ when $\tilde{x}_i\neq \tilde{x}_j$, we have when consider the CB instance with $g\sim Q$, the data generating process can be seen as following, as discussed in \cite{foster2020beyond}: \begin{enumerate}
    \item Sample $x_1,\dots,x_T$ i.i.d. from $\tilde{P}_0$, set $S_i:=\{t\in [T]: x_t = \tilde{x}_i \}$.
    \item For each $i \in [r]$, independently sample a Bernoulli MAB instance $\mathcal{P}_i$ with arm means $\mu_1,\mu_2$ such that with probability $\frac{1}{2}$, \begin{align*}
        \mu_1 = \frac{1}{4}, \mu_2 = \frac{1}{4}+\gamma
    \end{align*}
    and with probability $\frac{1}{2},$ \begin{align*}
        \mu_2 = \frac{1}{4}+\gamma, \mu_1 = \frac{1}{4}.
    \end{align*}
\end{enumerate}
Now by the same statement as in \cite{foster2020beyond}, we have selecting $\gamma \propto \sqrt{\frac{r}{T}} \propto \sqrt{\frac{m_2}{T}} $ leads to for any $\pi,$ there exists some $\epsilon_0\in\{-1,1\}^r$ so that for the CB instance with underlying reward function is $g_{\epsilon_0},$ \begin{align*}
   \frac{\gamma}{2}\E[\sum_{t = 1}^T  \bm{1}\{ \text{argmax}_{a} g_{\epsilon_0}(x,a) \neq \pi_{t}(x_t) \} ]    = \E[\sum_{t = 1}^T \max_{a}g_{\epsilon_0}(x_t,a ) - g_{\epsilon_0}(x_t,\pi_t(x_t))] \gtrsim \gamma T.
\end{align*}
Thus $\mathcal{G}$ is $\sqrt{\frac{m_2}{T}}$-inseparable. Above result also implies a $\Omega(\sqrt{m_2T})$  lower bound for $\Reg_{\text{CB}}$ over $\mathcal{G}$.\\

\noindent \textbf{Step2:} We can simply set the distribution $\tilde{P}'$ of $(x^{(1)},x^{(2)})$ as a constant distribution  $x_0$ with $\frac{1}{4}$ in its first and $m_1+1$-th coordinate and $0$ in other coordinates. Then let $P_0$ be the distribution of $(x^{(1)},x^{(2)},x^{(3)},x^{(4)})$ given by $(x^{(1)},x^{(2)}) \sim \tilde{P}', (x^{(3)},x^{(4)}) \sim \tilde{P}$, we have $f_0(x,a) = \langle \phi_1(x,a), \theta_0)$ with $\theta_0 = x_0$  is a function satisfying the condition of Theorem~\ref{thm-lb-general}. 

\noindent \textbf{Step3:}  By our construction, we have there exists some $f_0 \in \mathcal{F}$ so that $f(x,a) \equiv \frac{1}{4}$ under $P_0$, $\text{Reg}_{\text{CB}}(T;\mathcal{F})= \tilde{\Omega}( \sqrt{m_1T}).$ And $\mathcal{G}$ is $\tilde{\Theta}(\sqrt{\frac{m_2}{T}})$-inseparable, applying Theorem~\ref{thm-lb-general} leads to the $\tilde{\Omega}(\max\{\sqrt{m_1T},\sqrt{m_2 T} \})=\tilde{\Omega}(\sqrt{m_1T}+\sqrt{m_2 T}  )$ lower bound as desired.
\end{proof}

\subsubsection{Nonparametric CBwK}

\paragraph{Selection of Oracles and Upper Regret Bound} For the $p$-nonparametric class, by Theorem~3 of \cite{foster2020beyond}, a Vovk's aggregation based oracle achieves the $O(KT)^{1-\frac{2}{2+p}}$ online regression regret. So selecting $\mathcal{R}^r,\tilde{\mathcal{R}}^c$ as this oracle implies \begin{align*}
    \Regsq^r(T) \lesssim (KT)^{1-\frac{2}{2+p_1}}, \quad    \Regsq^c(T) \lesssim d(KT)^{1-\frac{2}{2+p_2}}.
\end{align*}
Bringing this result to Theorem~\ref{thm-regret-upper-bound} leads to the desired regret upper bound of nonparametric CBwK.

\paragraph{Regret Lower Bound Result}

Formally, we show the following lower bound result for non-parametric CBwK: 
\begin{theorem}\label{thm-lb-non-parametric}
    Let $\mathcal{W},\mathcal{V}$ be two function classes consisting of functions from $\mathcal{X}$ to $[0,1]$, satisfying $H(\mathcal{W})  = \Theta(\varepsilon^{-p_1}), H(\mathcal{V})  = \Theta(\varepsilon^{-p_2}).$ Then there exists $T>0, B\geq T/4$, a slightly modified class $\mathcal{V}',\mathcal{W}'$ with  $H(\mathcal{W}')  = \tilde{\Theta}(\varepsilon^{-p_1}), H(\mathcal{V}')  = \tilde{\Theta}(\varepsilon^{-p_2})$ so that for a CBwK instance with time horizon $T$, budget $B$ and the reward and cost function classes $\mathcal{F},\mathcal{G}$ constructed from $\mathcal{V}',\mathcal{W}'$ with $K = 2, d = 1$, any policy $\pi$ must have \begin{align*}
        \E[\Reg_{\pi} ]  = \tilde{\Omega}(T^{\frac{1+p_1}{2+p_1}}+ T^{\frac{1+p_2}{2+p_2}} )
    \end{align*}
\end{theorem}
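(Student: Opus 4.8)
The plan is to mirror the four-step reduction used in the proof of Theorem~\ref{thm-lb-linear}: construct the context space together with the reward and cost classes from suitably enlarged versions $\mathcal{W}',\mathcal{V}'$ of $\mathcal{W},\mathcal{V}$, verify that the cost class $\mathcal{G}$ is $\alpha$-inseparable for the right $\alpha$, exhibit an action-independent $f_0\in\mathcal{F}$, and then invoke Theorem~\ref{thm-lb-general}. Concretely, I would split the context $x=(x^{(r)},x^{(c)})$ into a reward-relevant part feeding $\mathcal{F}$ and a cost-relevant part feeding $\mathcal{G}$, so that reward and cost can be controlled independently, exactly as the coordinates $x^{(1)},x^{(2)}$ and $x^{(3)},x^{(4)}$ are used in the linear case, with $K=2,d=1$ and actions $\{0,1\}$.

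The heart of the argument (Step~1) is to realize inside $\mathcal{V}'$ the standard nonparametric ``many independent two-armed bandits'' construction of \cite{foster2020beyond}. I would partition the cost-relevant context space into $N\asymp\gamma^{-p_2}$ cells and, on each cell, place an independent $2$-armed sub-instance in which the two arms have cost means separated by a gap $\gamma$ around a baseline fixed to be $\geq 1/4$; the functions in the hard family $\mathcal{U}_\gamma\subset\mathcal{G}$ are piecewise-constant perturbations indexed by $\epsilon\in\{-1,1\}^N$ encoding which arm is cheaper in each cell. The baseline $\geq 1/4$ gives property~1 of Definition~\ref{def-gamma-inseparable} with gap $\gamma$. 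For property~2, a contexts-land-uniformly-in-cells argument reduces the problem to $N$ independent Bernoulli bandits with $\asymp T/N$ samples and gap $\gamma$; choosing $\gamma\asymp T^{-1/(2+p_2)}$ makes the per-cell sample budget $T/N\asymp T^{2/(2+p_2)}$ match the distinguishing threshold $\gamma^{-2}$, so a standard multiple-hypothesis (Assouad/KL) argument forces a constant fraction of cells to be mislabeled and hence $\Omega(T)$ suboptimal pulls in expectation against some $\epsilon$. This certifies that $\mathcal{G}$ is $\gamma$-inseparable with $\gamma T\asymp T^{(1+p_2)/(2+p_2)}$.

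In Step~2 I would check the entropy bookkeeping and produce $f_0$. Enlarging $\mathcal{V}$ to $\mathcal{V}'$ by adding these bumps keeps $H(\mathcal{V}',\varepsilon)=\tilde\Theta(\varepsilon^{-p_2})$: the bump family alone has log-packing $\asymp N\asymp\gamma^{-p_2}$ at scale $\gamma$, and discretizing the bump heights over a grid costs only logarithmic factors, which is exactly what the $\tilde\Theta$ absorbs. The same construction inside $\mathcal{W}'$ with gap $\gamma_1\asymp T^{-1/(2+p_1)}$ and $N_1\asymp\gamma_1^{-p_1}$ cells gives $H(\mathcal{W}',\varepsilon)=\tilde\Theta(\varepsilon^{-p_1})$ together with $\RegCB(T;\mathcal{F})\gtrsim T^{(1+p_1)/(2+p_1)}$, which is precisely Theorem~3 of \cite{foster2020beyond} applied to $\mathcal{W}'$. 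The action-independent $f_0$ required by Theorem~\ref{thm-lb-general} is obtained by taking a constant function of value $\geq c$ in $\mathcal{W}'$, so that $f_0(x,0)=f_0(x,1)=\Omega(1)$ a.s.\ under the base distribution $P_0$.

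Finally (Step~3), with $\mathcal{G}$ certified $\gamma$-inseparable, the existence of $f_0$, and $\RegCB(T;\mathcal{F})\gtrsim T^{(1+p_1)/(2+p_1)}$ in hand, Theorem~\ref{thm-lb-general} directly yields $\Reg_\pi(T)\gtrsim\max\{\RegCB(T;\mathcal{F}),\gamma T\}=\tilde\Omega(T^{(1+p_1)/(2+p_1)}+T^{(1+p_2)/(2+p_2)})$. The main obstacle I anticipate is Step~1 together with the entropy bookkeeping in Step~2: one must simultaneously (i) keep the perturbation family rich enough to be indistinguishable at the critical gap $\gamma\asymp T^{-1/(2+p_2)}$, (ii) keep it small enough that the metric entropy of the enlarged class does not exceed $\tilde\Theta(\varepsilon^{-p_2})$, and (iii) make the reward- and cost-relevant parts of the context genuinely independent so that the $f_0$-instance has reward independent of the action and the regret is driven solely by over-consumption, as required by the hypotheses of Theorem~\ref{thm-lb-general}.
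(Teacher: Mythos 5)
Your proposal is correct and follows essentially the same route as the paper: both build the hard cost family by embedding the piecewise-constant perturbation construction from Theorem~2/3 of \cite{foster2020beyond} into an enlarged class $\mathcal{V}'$ to certify $\tilde\Theta(T^{-1/(2+p_2)})$-inseparability of $\mathcal{G}$, obtain $f_0$ by adjoining a constant function to the reward class, reuse the nonparametric CB lower bound for $\mathcal{F}$, and conclude via Theorem~\ref{thm-lb-general}. The only cosmetic difference is that you propose splitting the context into reward- and cost-relevant coordinates as in the linear case, whereas the paper sidesteps this by making $f_0$ constant in both context and action.
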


Theorem~\ref{thm-lb-non-parametric} can be seen as a extension of Theorem~2 in \cite{foster2020beyond} for non-parametric contextual bandits. One subtlety of these nonparametric results is that, in contrast to the results in linear case, the lower bound is stated over a modification of the original function classes $\mathcal{F},\mathcal{G}$. That makes the result slightly different from the classical worst-case lower bound results for CB or CBwK. However, such lower bound implies the optimality of our algorithm with respect to the complexity of the considered function classes, which receive the most attention and often provide tight characterization of the problem in nonparametric setting.

\begin{proof}[Proof of Theorem~\ref{thm-lb-non-parametric}]
Similarly to the proof of Theorem~\ref{thm-lb-linear}, we would use Theorem~\ref{thm-lb-general} to develop Theorem~\ref{thm-lb-non-parametric}. Our analysis includes three steps: \begin{enumerate}
    \item Construct a modification $\mathcal{V}'$ of $\mathcal{V}$ so that its corresponding cost function class $\mathcal{G}$ satisfies the $\alpha$-inseparable property.
    \item Construct a modification $\mathcal{W}'$ of $\mathcal{W}$ so that its corresponding cost function class $\mathcal{F}$ containing some $f_0\in \mathcal{F}$ that satisfies the condition of Theorem~\ref{thm-lb-general}.
    \item Plug the $\alpha T$ and $\Reg_{\text{CB}}(T;\mathcal{{F}})$ into Theorem~\ref{thm-lb-general} to get the desired lower bound.
\end{enumerate}

\textbf{Step1: }
The construction of $\mathcal{G}$ in Step~1 is from the construction used in the proof of Theorem~2 in \cite{foster2020beyond}. And we restate several key steps for completeness. \\
For any $0<\gamma\leq \frac{1}{4}$ and $\mathcal{V}$ with $H(\mathcal{V}) = \Theta(\varepsilon^{-p_2}),$ by the argument in \cite{foster2020beyond}, we can find $m = \Theta(\gamma^{-p_2})$ distinct $x^{(1)},\dots,x^{(m)}\in \mathcal{X}$  and a class $\mathcal{V'}\supset \mathcal{V} $ containing $\{h_\epsilon\}_{\epsilon \in \{-1,1\}^m}$ such that \begin{align*}
	h_\epsilon:\mathcal X \to [0,1],\quad  h_\epsilon(x^{(i)}) = \dfrac{1+\epsilon_i\gamma}{2}.
\end{align*}

For the cost class $\mathcal{G}$ corresponds to $\mathcal{V}'$, we consider its subset $\mathcal{U}_\gamma$:
\begin{align*}
    \mathcal{U}_\gamma = \{g_{\epsilon}(x,a): g_{\epsilon}(x,0) = h_\epsilon(x),\quad g_{\epsilon}(x,1) = \frac{1}{2}, \epsilon\in \{-1,1\}^m  \}.
\end{align*}
Let $P^\gamma$ be the uniform distribution over $\{x^{(1)},\dots,x^{(m)}\}$, then for
\begin{align*}
	\pi_{\epsilon}^*(x^{(i)}) = \left\{\begin{matrix}
	 	 0, & \text{ if }\epsilon_i = -1, \\
		1, & \text{otherwise}.
	\end{matrix}\right. ,
\end{align*}
we have
\begin{align*}
\max_{a}g_\epsilon(x,a)- \min_{a}g_\epsilon(x,a)> \frac{\gamma}{2}, \quad  \frac{1}{4}\leq \frac{1-\gamma}{2} \leq  \E_{x}[\min_{a} g_{\epsilon}(x,a)]<
   \E_{x}[\max_{a} g_{\epsilon}(x,a)]\leq  \frac{1+\gamma}{2}\leq \frac{3}{4}.
\end{align*}
Now for every sufficiently large $T$, the argument in \cite{foster2020beyond} 
shows that when letting $\gamma \propto T^{-\frac{1}{2+p_2}}/\text{polylog}(T)$, $P_{\mathcal{X}} = P_{\gamma}$, for every every policy $\pi$, there exists some $\epsilon_0$ so that when the reward is generated as $r(x_t,a)\sim \text{Bernoulli}(g_{\epsilon_0}(x,a)), $ it holds that \begin{align*}
   \frac{\gamma}{2}\E[\sum_{t = 1}^T  \bm{1}\{ \pi_{\epsilon_0}^*(x_t)\neq \pi_{t}(x_t) \} ]    = \E[\sum_{t = 1}^T g_{\epsilon_0}(x_t,\pi_{\epsilon_0}^*(x_t) ) - g_{\epsilon_0}(x_t,\pi_t(x_t))] \gtrsim \gamma T.
\end{align*} 
This implies the constructed $\mathcal{G}$ is $\tilde{\Theta}(T^{-\frac{1}{2+p_2}})$-inseparable.

\noindent \textbf{Step2:} Using the same proof as Theorem~2 in \cite{foster2020beyond}, for $\mathcal{W}$ with $H(\mathcal{W},\varepsilon) = \Theta (\varepsilon^{-p_1}), $ we can construct some $\mathcal{W}'$ so that the corresponding $\mathcal{F}'$ satisfies $\Reg_{\text{CB}}(T;\mathcal{F}) = \tilde{\Omega}( T^{\frac{1+p_1}{2+p_1} })$, we just construct $\mathcal{F}$ by adding a constant function $f(x,a)\equiv 1$ into $\mathcal{F}'$. 

\noindent \textbf{Step3:} By our construction, we have there exists some $f_0 \in \mathcal{F}$ so that $f(x,a) \equiv 1$, $\text{Reg}_{\text{CB}}(T;\mathcal{F})= \tilde{\Omega}( T^{\frac{1+p_1}{2+p_1}}). $ And $\mathcal{G}$ is $\tilde{\Theta}(T^{-\frac{1}{2+p_2}})$-inseparable, applying Theorem~\ref{thm-lb-general} leads to the $\tilde{\Omega}(\max\{T^{\frac{1+p_1}{2+p_1}},T^{\frac{1+p_1}{2+p_2}}  \})=\tilde{\Omega}(T^{\frac{1+p_1}{2+p_1}}+T^{\frac{1+p_1}{2+p_2}}  )$ lower bound as desired.
\end{proof}

\section{PROOF OF RESULTS IN SECTION~\ref{sec-general-B} }
\subsection{Estimation Error of Online-To-Batch Conversion Oracle}\label{sec-OTB}

Given online regression oracles $\mathcal{R}^r,\mathcal{R}^c$ satisfying \eqref{regret-reward-oracle} and \eqref{regret-cost-oracle}, we formally define the OTB oracles $\mathcal{R}^r_{\text{est}},\mathcal{R}^c_{\text{est}}$ 
as following: Given the dataset $\mathcal{D}_a$ as in Assumption~\ref{assumption-relaxB}, if we run online oracles $\mathcal{R}^r,\mathcal{R}^c$ over $\mathcal{D}_a$, and suppose the output of online oracles are given by $\{f_{i}(\cdot,a)\}_{i = 1}^M, \{\bm{g}_{i}(\cdot,a)\}_{i = 1}^M$, the output $\hat{f}(\cdot,a),\hat{\bm{g}}(\cdot,a)$ of $\mathcal{R}^r_{\text{est}},\mathcal{R}^c_{\text{est}}$ is defined as following: \begin{align}\label{eq-OTB-oracle-def}
    \hat{f}(x,a): = \dfrac{1}{M}\sum_{i=1}^M f_i(x,a),\quad  
    \hat{\bm g}(x,a): = \dfrac{1}{M}\sum_{i=1}^M {\bm g}_i(x,a).
\end{align}
Now we would show the following estimation error guarantee of $\hat{f},\hat{\bm g}$ defined above: 
\begin{lemma}\label{lem-OTB-est-error}
    For every $a$, suppose the online regression oracles $\mathcal{R}^r,\mathcal{R}^c$ satisfies \eqref{regret-reward-oracle} and \eqref{regret-cost-oracle}, we have $\hat{f}(\cdot,a),\hat{\bm g}(\cdot,a)$ defined in \eqref{eq-OTB-oracle-def}  satisfies Assumption~\ref{assumption-relaxB} with \begin{align*}
       \mathcal{E}_{\delta}(M;\mathcal{F})\lesssim \frac{\Reg_{sq}^r(M) \log({1}/{\delta})}{M},\quad   \mathcal{E}_{\delta}(M;\mathcal{G})\lesssim \frac{\Reg_{sq}^c(M) \log({1}/{\delta}) }{M}.
    \end{align*}
\end{lemma}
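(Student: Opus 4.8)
The plan is to carry out a standard online-to-batch conversion, with the one nonstandard ingredient being a variance-aware martingale bound that turns realized online losses into population risk while losing only a $\log(1/\delta)$ factor. Fix an arm $a$ and write $\mathscr{F}_{i-1}$ for the $\sigma$-field generated by $(x_1,r_{1,a},\ldots,x_{i-1},r_{i-1,a})$, so that the $i$-th online predictor $f_i(\cdot,a)$ is $\mathscr{F}_{i-1}$-measurable while $x_i\sim P_{\mX}$ is drawn independently of $\mathscr{F}_{i-1}$. Since squaring is convex, Jensen's inequality applied to the average $\hat f(x,a)=\frac1M\sum_i f_i(x,a)$ gives, pointwise in $x$,
$$\big(\hat f(x,a)-f^*(x,a)\big)^2\le \frac1M\sum_{i=1}^M\big(f_i(x,a)-f^*(x,a)\big)^2,$$
so taking $\E_{x\sim P_{\mX}}$ reduces the claim to controlling $\frac1M\sum_i R_i$, where $R_i:=\E_{x\sim P_{\mX}}[(f_i(x,a)-f^*(x,a))^2\mid\mathscr{F}_{i-1}]$ is the conditional population risk of $f_i$.

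First I would observe that, because $x_i$ is drawn from $P_{\mX}$ independently of the history, $R_i=\E[\ell_i\mid\mathscr{F}_{i-1}]$, where $\ell_i:=(f_i(x_i,a)-f^*(x_i,a))^2\in[0,1]$ is exactly the per-round loss the online oracle incurs on the realized sample. Thus $D_i:=R_i-\ell_i$ is a martingale difference sequence with $|D_i|\le 1$, and its conditional variance is self-bounding: $\mathrm{Var}(D_i\mid\mathscr{F}_{i-1})\le\E[\ell_i^2\mid\mathscr{F}_{i-1}]\le\E[\ell_i\mid\mathscr{F}_{i-1}]=R_i$, using $\ell_i^2\le\ell_i$ for $\ell_i\in[0,1]$. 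Applying the Freedman-type bound Lemma~\ref{lem-Freedman} with $b=1$ and realized variance $V_M\le\sum_{i=1}^M R_i=:S$ yields, with probability at least $1-\delta$,
$$S-\sum_{i=1}^M\ell_i\;\le\;4\sqrt{S\log(1/\delta)}+2\log(1/\delta).$$
The main obstacle is precisely this step: a crude Azuma/Hoeffding bound would contribute an additive $\sqrt{M\log(1/\delta)}$ term and destroy the target rate, so it is essential to exploit the self-bounding variance to make the deviation scale with $\sqrt{S}$ rather than $\sqrt{M}$.

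Solving the resulting quadratic inequality in $\sqrt{S}$ (via $4\sqrt{S\log(1/\delta)}\le \tfrac12 S+8\log(1/\delta)$) gives $S\lesssim \sum_i\ell_i+\log(1/\delta)$, and invoking the online regret guarantee \eqref{regret-reward-oracle}, namely $\sum_i\ell_i\le\Regsq^r(M)$, I obtain $\sum_i R_i\lesssim \Regsq^r(M)+\log(1/\delta)\lesssim \Regsq^r(M)\log(1/\delta)$; dividing by $M$ gives the stated bound on $\mathcal{E}_\delta(M;\mathcal{F})$. The cost oracle is handled verbatim: the loss $\ell_i^c:=\lVert\bm g_i(x_i,a)-\bm g^*(x_i,a)\rVert_\infty^2$ again lies in $[0,1]$, and since $\bm y\mapsto\lVert\bm y\rVert_\infty^2$ is convex (the composition of the convex norm with the convex increasing square), the same Jensen step bounds $\lVert\hat{\bm g}(x,a)-\bm g^*(x,a)\rVert_\infty^2$ by the average of the per-round $\ell_\infty^2$ losses; repeating the martingale argument and using \eqref{regret-cost-oracle} gives $\mathcal{E}_\delta(M;\mathcal{G})\lesssim \Regsq^c(M)\log(1/\delta)/M$.
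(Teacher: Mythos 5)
Your proposal is correct and follows essentially the same route as the paper's proof: Jensen's inequality to pass from the averaged predictor to the average of per-round population risks, the self-bounding variance bound $\mathrm{Var}(D_i\mid\mathscr{F}_{i-1})\le \E[\ell_i\mid\mathscr{F}_{i-1}]$ exploiting $\ell_i\in[0,1]$, Freedman's inequality (Lemma~\ref{lem-Freedman}) to relate population risk to realized online loss with deviation scaling as $\sqrt{S}$, and then solving the quadratic and invoking the online regret bound. The only cosmetic difference is that the paper applies the uniform-in-time Freedman lemma and absorbs the resulting $2\delta\log T$ failure probability by reparametrizing $\delta$, a bookkeeping step you elide but which does not affect the stated rate.
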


\begin{proof}[Proof of Lemma~\ref{lem-OTB-est-error}]
We would prove the result for $\hat{\bm g},$  the proof of $\hat{f}$ is similar.\\

By the convexity of $x\to x^2,$ we have  \begin{align*}
	&\E_{x}[\lVert \dfrac{1}{M}\sum_{t = 1}^{M} \bm{g}_t(x,a) - \bm{g}^*(x,a)\rVert_\infty^2] \\
	\leq & \dfrac{1}{M}\sum_{t = 1}^{M} \E_{x}[ \lVert  {\bm g}_t(x,a) - {\bm g}^*(x,a)\rVert_\infty^2]\\
\leq &\dfrac{\Reg_{sq}^r(M)}{M}  +\dfrac{1}{M}\sum_{t = 1}^{M} \big(\E_{x}[\lVert  {\bm g}_t(x,a) - {\bm g}^*(x,a)\rVert_\infty^2]- \lVert  {\bm g}_t(x_t,a) - {\bm g}^*(x_t,a)\rVert_\infty^2\big).
\end{align*}

In particular, we have for $W_t: = \underbrace{\lVert  {\bm g}_t(x_t,a) - {\bm g}^*(x_t,a)\rVert_\infty^2}_{:= V_t} - \E_{x}[\lVert  {\bm g}_t(x,a) - {\bm g}^*(x,a)\rVert_\infty^2]$, $\E[W_t\lvert \mathscr{F}_{t-1}'] = 0$ and $\lvert V_t \rvert \leq 1$. As a result, \begin{align*}
	\text{Var}(W_t\lvert \mathscr{F}_{t-1}')  &\leq  \E[ W_t^2 \lvert \mathscr{F}_{t-1}' ]=\text{Var}[V_t \lvert \mathscr{F}_{t-1}']\leq \E[V_t^2 \lvert \mathscr{F}_{t-1}']\\
	&\leq \E[V_t \lvert \mathscr{F}_{t-1}'] = \E_{x}[\lVert  {\bm g}_t(x,a) - {\bm g}^*(x,a)\rVert_\infty^2 ]
\end{align*}
Now by Lemma~\ref{lem-Freedman}, we have with probability at least $1-2\delta\log T$, {\footnotesize \begin{align*}
 &\big \lvert  \sum_{t = 1}^M \lVert  {\bm g}_t(x_t,a) - {\bm g}^*(x_t,a)\rVert_\infty^2 - \E_{x}[\lVert  {\bm g}_t(x,a) - {\bm g}^*(x,a)\rVert_\infty^2]  \big \rvert  = \big \lvert  \sum_{t = 1}^M W_t \big \rvert\\
 \lesssim & \sqrt{\log (1/\delta)}\sqrt{\max\{\sum_{t = 1}^M \E_{x}[\lVert  {\bm g}_t(x,a) - {\bm g}^*(x,a)\rVert_\infty^2,\log (1/\delta)\}  }
\end{align*}}
Thus when $\max\{\sum_{t = 1}^M \E_{x}[\lVert  {\bm g}_t(x,a) - {\bm g}^*(x,a)\rVert_\infty^2 > \log (1/\delta),$
we have \begin{align*}
     \sum_{t = 1}^M \E_{x}[ \lVert  {\bm g}_t(x,a) - {\bm g}^*(x,a)\rVert_\infty^2] \lesssim \Regsq^c(M) + \sqrt{\log (1/\delta)\sum_{t = 1}^M \E_{x}[ \lVert  {\bm g}_t(x,a) - {\bm g}^*(x,a)\rVert_\infty^2]},
\end{align*}
which then implies 
\begin{align*}
     \sum_{t = 1}^M \E_{x}[ \lVert  {\bm g}_t(x,a) - {\bm g}^*(x,a)\rVert_\infty^2] \lesssim \Regsq^c(M)+\log (1/\delta).
\end{align*}
That leads to the desired result with replacing $\delta$ by $\frac{\delta}{2\log T}$.
\end{proof}

\subsection{Proof of Theorem~\ref{thm-finer-regret}}

\begin{proof}[Proof of Theorem~\ref{thm-finer-regret}]
Since in Algorithm~\ref{alg-squaredCBwK-Z}, we use $O(KT_0)$ steps to take exploration, the expected regret incurred by this stage is upper bounded by $O(\dfrac{T\OPT}{B}+1)KT_0$. For the second stage, since $Z \leq CT\OPT/B$, the expected regret is upper bounded by \begin{align*}
    (\frac{T\OPT}{B}+1) \sqrt{KT \big( \Regsq^r(T) + \Regsq^c(T)+\log T \big)}
\end{align*}
Then Theorem~\ref{thm-finer-regret} holds by add the regret in these two stages together.
\end{proof}

\subsection{Proof of Lemma~\ref{lem-opt-error}}
\begin{proof}[Proof of Lemma~\ref{lem-opt-error}]
Similar to \cite{agrawal2016linear} in linear case, our proof relies on the result about the ``intermediate sample optimal'' $\overline{\OPT}^\epsilon$, which is the value of \begin{equation}\label{eq-intermediate-LP}
	\begin{aligned}
 	  \max_{\bm p \in (\Delta^K)^{T_0} } & \dfrac{1}{T_0} \sum_{t=KT_0+1}^{(K+1)T_0}  f^*(x_t,a) p_a(x_t)\\
 	  \text{subject to } & \dfrac{1}{T_0}\sum_{t = KT_0+1}^{(K+1)T_0} \bm g^*(x_t,a) p_a(x_t) \leq \bm B/T + \epsilon \bm 1.  
\end{aligned}
\end{equation}

Applying the Lemma~F.4 and F.6 of \cite{agrawal2014fast} in the same way as \cite{agrawal2016linear}\footnote{Notice that our definition of $\epsilon$ and $\OPT$ corresponds to the $\gamma/T$ and $\OPT /T$  in \cite{agrawal2016linear}. }  leads to \begin{align}\label{eq-inter-bound}
	\OPT - \epsilon \leq  \overline{\OPT}^\epsilon \leq \OPT + 2 \epsilon( \frac{T\OPT}{B}+ 1 )
\end{align}
with probability $1-O(1/T^2)$ when $\epsilon \geq 2\sqrt{\frac{\log (T_0d)}{T_0}} .$

Now it sufficient to bound $\widehat{\OPT} (T_0)$ and $\overline{\OPT}^{\epsilon}:$ 
Firstly noticing that since we have used $ T_0 $ i.i.d. samples from  $P_X$  to estimate $f(\cdot,a),\bm g(\cdot,a)$, we have with probability at least $1-O(1/T^2),$ \begin{align}\label{eq-high-prob-estimation}
	\E_{x} \big [ \big(\hat{f}_0(x,a)  - f(x,a)\big)^2 \big] \leq \mathcal{E}_{\delta_0}(T_0;\mF),\quad \E_{x} \big [ \big\lVert \hat{\bm g}_0(x,a)  - \bm g(x,a)\big\rVert_\infty^2 \big] \leq d\mathcal{E}_{\delta_0}(T_0;\mG), \quad \forall a\in [K],    
\end{align}
where $\delta_0 = 1/{TK}^2. $
Thus we have with probability at least $1-O(1/T^2),$ for all $\bm p \in (\Delta^{K})^{T_0} $, \begin{align*}
	  \big \lvert \sum_{t = KT_{0}+1}^{(K+1)T_0} \sum_{a}p_{a}(x_t) [ \hat{f}_0(x,a) - f^*(x,a)]  \big\rvert &\leq \sum_{t = KT_{0}+1}^{(K+1)T_0} \max_a \lvert \hat{f}_0(x,a) - f^*(x,a) \rvert   \\
	 &\leq \sum_{t = KT_0+1}^{(K+1)T_0}  \sqrt{\E[\max_{a}\lvert \hat{f}_0(x,a) - f^*(x,a) \rvert^2 ]}+ 4\sqrt{ T_0\log T }, \\
  &\leq T_0 \sqrt{K\mathcal{E}_{\delta_0}(T_0;\mathcal{F})}+4 \sqrt{T_0\log T}
\end{align*}
where the second line is by Cauchy-Schwartz inequality and  Hoeffding's inequality and the third line is by \eqref{eq-high-prob-estimation}. Applying the same argument to $\hat{\bm g}_0,$ we get 
\begin{align*}
	\Probability \bigg(\big\lvert    \sum_{t = KT_0+1}^{(K+1)T_0} \sum_{a}p_{a}(x_t) [ \hat{f}_0(x,a) - f^*(x,a)]\big\rvert \geq  \underbrace{T_0 \sqrt{ K\mathcal{E}_{\delta_0}(T_0;\mF)  }+ 4\sqrt{ T_0\log T }}_{\leq T_0  \mathcal{R}(T_0)},  \forall \bm p \in (\Delta^K)^{T_0}  \bigg)  \lesssim  1/T^2. 
\end{align*}and
$$
	\Probability \bigg(\big\lVert    \sum_{t = KT_0+1}^{(K+1)T_0} \sum_{a}p_{a}(x_t) [ \hat{\bm g}_0(x,a) - \bm g^*(x,a)]\big\rVert_\infty \geq  \underbrace{\big (T_0 \sqrt{K \mathcal{E}_{\delta_0}(T_0;\mG)  }+  4\sqrt{ T_0\log T }\big )}_{\leq T_0 \mathcal{R}(T_0)},  \forall \bm p \in (\Delta^K)^{T_0}   \bigg)  \lesssim  1/T^2. $$

Denoting $\bar {\bm p}$ the optimal solution of \eqref{eq-intermediate-LP} for  $2\sqrt{\frac{\log(Td)}{T_0}} \leq \epsilon \leq \mathcal{R}(T_0) $, we have 
 the deviation bound of $\hat{\bm g}_0$ implies $\bar{\bm p}$ is feasible in \eqref{eq-Z-est-LP} with probability $1-O(1/T^2)$, thus combining the deviation bound of $\hat{f}_0,$ we have with probability at least $1-O(1/T^2)$, {\small \begin{align}\label{eq-lb-Z-LP}
	\widehat{\OPT} (T_0) \geq  \dfrac{1}{T_0} \sum_{t = KT_0+1}^{(K+1)T_0} \sum_a \bar{p}_a(x_t)\hat{f}_0(x_t,a) \geq  \dfrac{1}{T_0} \sum_{t = KT_0+1}^{(K+1)T_0}\sum_a \bar{p}_a(x_t){f}^*(x_t,a)-\mathcal{R}(T_0)  = \overline{\OPT}^{\epsilon} -\mathcal{R}(T_0).
\end{align}}
On the other hand,  for $\hat{\bm p}$ the optimal solution of \eqref{eq-Z-est-LP}, we have with high probability $\hat{\bm p}$ is feasible for \eqref{eq-intermediate-LP} with $\epsilon_0 = 2 \mathcal{R}(T_0),$ thus with probability at least $1-O(1/T^2),$ we have{\small \begin{align}\label{eq-ub-Z-LP}
	\widehat{\OPT}(T_0) = \dfrac{1}{T_0} \sum_{t = KT_0+1}^{(K+1)T_0} \sum_{a}\hat{p}_a(x_t)\hat{f_0}(x_t,a)\leq 
	\dfrac{1}{T_0} \sum_{t = KT_0+1}^{(K+1)T_0} \sum_{a}\hat{p}_a(x_t){f_0^*}(x_t,a)+ \mathcal{R}(T_0) \leq \overline{\OPT}^{\epsilon_0} + \mathcal{R}(T_0).
\end{align}}
Now combine \eqref{eq-inter-bound},\eqref{eq-lb-Z-LP},\eqref{eq-ub-Z-LP}, we get with probability at least $1-O(1/T^2),$ \begin{align*}
	\OPT  \leq \widehat{\OPT}(T_0)+ \mathcal{R}(T_0) \leq \OPT +6 \mathcal{R}(T_0)(\dfrac{T\OPT}{B}+{1}),
\end{align*}
as desired.
\end{proof}

\subsection{Result in Linear CBwK}
In linear CBwK setting, suppose both $\mathcal{F}$ and $\tilde{\mathcal{G}}$ are $m$-dimensional linear classes. If we select the online oracles $\mathcal{R}^r,\tilde{\mathcal{R}}^c$ as in Table~\ref{tbl-table-comparison} and offline oracles $\mathcal{R}^r_{est}, \mathcal{R}^c_{est}$ as the OTB oracle constructed from $\mathcal{R}^r,\mathcal{R}^c$, we have then \begin{align}\label{eq-example-linear-general-B}
    \Regsq^r(T)\lesssim m \log T, \Regsq^c(T)\lesssim md \log T, \mathcal{E}_{T_0}(\mathcal{F}) \lesssim \dfrac{m\log T}{T_0},\mathcal{E}_{T_0}(\mathcal{G}) \lesssim \dfrac{md\log T}{T_0}. 
\end{align}
Now bringing \eqref{eq-example-linear-general-B} to Theorem~\ref{thm-finer-regret} and selecting $T_0 =  (md)^{1/3} \sqrt{T/K} $leads to 
\begin{align*}
    \Reg(T) \lesssim (\frac{T\OPT }{B}+1 ) ( \sqrt{KTdm})
\end{align*}
when $B = \tilde{\Omega}\big((md)^{1/3}\sqrt{KT}\big).$

\subsection{Results in Nonparametric CBwK}
In nonparametric CBwK setting, suppose both $\mathcal{F}$ and $\tilde{\mathcal{G}}$ are $p$-nonparametric classes. If we select the online oracles $\mathcal{R}^r,\tilde{\mathcal{R}}^c$ as in Table~\ref{tbl-table-comparison} and offline oracles $\mathcal{R}^r_{est}, \mathcal{R}^c_{est}$ as the OTB oracle constructed from $\mathcal{R}^r,\mathcal{R}^c$, we have then \begin{align}\label{eq-example-nonpara-general-B}
    \Regsq^r(T)\lesssim  (KT)^{1-\frac{2}{2+p}}, \Regsq^c(T)\lesssim  d (KT)^{1-\frac{2}{2+p}}, \mathcal{E}_{T_0}(\mathcal{F}) \lesssim \dfrac{(KT_0)^{1-\frac{2}{2+p}}}{T_0},\mathcal{E}_{T_0}(\mathcal{G}) \lesssim \dfrac{d(KT)^{1-\frac{2}{2+p}}}{T_0}. 
\end{align}
Now bringing \eqref{eq-example-nonpara-general-B} to Theorem~\ref{thm-finer-regret} and selecting $T_0 =  d^{\frac{2+p}{6+2p}} K^{\frac{-1}{2+p}}  T^{\frac{1+p}{2+p}}$ leads to 
\begin{align*}
    \Reg(T) \lesssim \tilde{O}\big((\frac{T\OPT}{B}+1)\sqrt{d}(KT)^{\frac{1+p}{2+p}}  \big)  
\end{align*}
when
$B = \tilde{\Omega}( d^{\frac{2+p}{6+2p}} (KT)^{\frac{3+p}{4+2p}}).$

\section{NUMERICAL RESULTS}\label{simulations}

We provide simulation results in this section to validate the time horizon ($T$), dimension ($m$), and number of arms ($K$) dependencies of SquareCBwK, which employs Newtonized GLMtron and Online Gradient Descent oracles, for the linear CBwK. We also conduct a performance comparison of SquareCBwK with LinUCB \citep{agrawal2016linear}.

For general distributions of $x_{t,a}$, solving the OPT value from population linear programming \eqref{eq-OPT-static} is challenging which renders the computation of regret, $T\text{OPT} - \E[\sum_{t = 1}^\tau r_{t,a_t} ]$, an intractable task. Therefore, in our subsequent simulations, we adhere to the fixed context setting, which simplifies the process of determining the OPT value to that of solving a linear programming, as outlined in Section~\ref{sec-exp-setting}. In fact, the fixed context setting is a special case of i.i.d. context by letting the distribution of contexts be the point mass distribution.

\subsection{Experiment Setting}\label{sec-exp-setting}
Throughout the experiment, we assume the linear structured reward and cost classes. For any fixed dimension $m$, arm number $K \leq m-1,$ and number of constraints $4\leq d\leq m-1$, we set
\begin{enumerate}
    \item \textbf{Underlying parameters:}$$\theta_0 = \frac{1}{\sqrt{2}}(e_1+e_2),\theta_1 = \frac{1}{\sqrt{2}}(e_1+e_3), \theta_2 = \frac{1}{2}(e_2+e_3+e_4+e_5),\theta_i = e_{i+1}, 3\leq i\leq d. $$
    \item \textbf{Fixed context set:} At every round $t$, the contexts $x_{t,a}$ are given by $ x_{t,a} =  \frac{1}{\sqrt{2}} e_1+e_{a+1}, a \in [K] $
    \item \textbf{Generation of rewards and costs:} At every round $t,$ after an action $a_t\in [K]$ is selected 
    \begin{align*}
        r_t = \langle x_{t,a_t},\theta_0 \rangle + \epsilon_{t,0}, \quad
        c_{t,i} = \langle x_{t,a_t},\theta_i \rangle + \epsilon_{t,i}, i \in [d] 
    \end{align*}  
    with $\epsilon_{t,i}\sim_{i.i.d.} \mathcal{N}(0,0.2).$ 
\end{enumerate}

In the experiments that follow, we will simulate the three algorithms in Table~3 independently with one varying hyper-parameter, selected from $m,K$, and $T$, while keeping the remaining two hyper-parameters constant. This will allow us to validate the theoretical dependency as detailed in Table~3. The source code for reproducing these results can be found in \url{https://github.com/quejialin/SquareCBwK}

{\centering
\resizebox{0.5\linewidth}{!}{
	  \begin{tabular}{|c|c|c|c|}
	  \hline
	 Parameter & GLMtronNewton & Online GD & LinUCB \\
  	  \hline
	 $K$ &  $O(\sqrt{K})$ & $O(\sqrt{K})$ &  $O(1)$ \\
  	  \hline
	 $m$ & $O(\sqrt{m})$ & $O(1)$ & $O({m})$ \\  
            \hline
	 $T$ & $O(\sqrt{T})$ & $O(T^{3/4})$ & $O(\sqrt{T})$ \\
  \hline
	  \end{tabular}}
    \captionof{table}{Theoretical dependency of SquareCBwK with Newtonized GLMtron oracle, SquareCBwK with Online Gradient Descent oracles, and 
 LinUCB algorithms \citep{agrawal2016linear}  on $K,m,T$}

}

\begin{figure}[htb]
    \centering
\subcaptionbox{\label{1}}{\includegraphics[width = .43\linewidth]{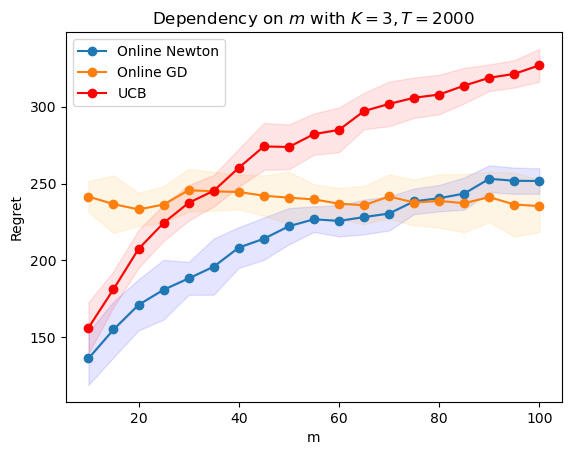}}\hfill
\subcaptionbox{\label{2}}{\includegraphics[width = .43\linewidth]{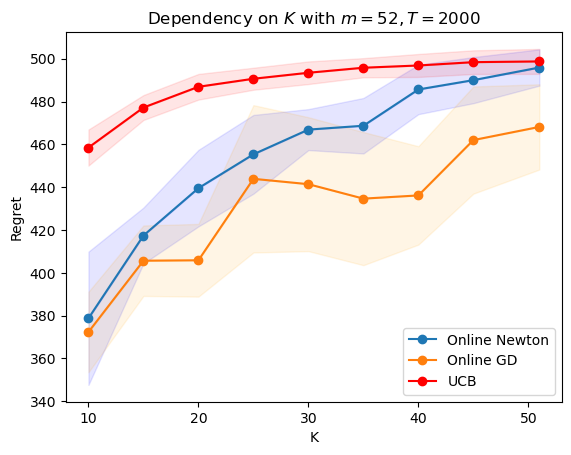}}

\subcaptionbox{\label{1}}{\includegraphics[width = .43\linewidth]{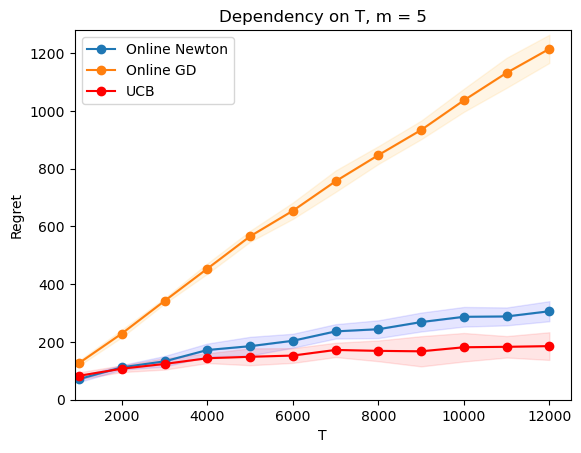}}\hfill
\subcaptionbox{\label{2}}{\includegraphics[width = .43\linewidth]{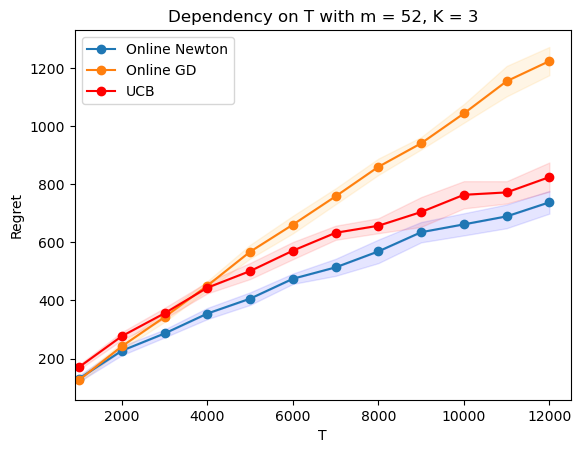}}

\caption{Dependency of different algorithms on different parameters $m,K,T$, where the blue curve corresponds to the regret of SquareCBwK with Newtonized GLMtron oracle, the orange curve shows regret performance of SquareCBwK with online Gradient oracle, and the red curve represents the regret of LinUCB.}
    \label{fig:my_label}
\end{figure}

\subsection*{Dependency on $m$}

To examine the dependency on $m$, we fix $K = 3, T = 2000$ and simulate different algorithms with $m\in \{10,14,19,\dots,96,101\}$ over 10 times. The regret curve is presented in Figure~1(a), from which we can see that the regret of the SquareCBwK with Online-GD oracle is almost unaffected by $m$, and that the regret of GLMtron oracle grows at a slower pace than LinUCB as $m$ increases, matching the theoretical guarantees in Table~3.

\subsection*{Dependency on $K$}

To examine the dependency on $K$, we fix $m = 52, T = 2000$ and simulate different algorithms with $K\in \{5,10,\dots,45,50\}$ over 10 times. Note that here we pick a large $m=52$ since we need to make sure the condition $K<=m-1$ holds. The regret curve is presented in Figure~1(b). Although the dependency of LinUCB on $K$ is better than SquareCBwK with GLMtron or Online-GD oracles, the regret of LinUCB is larger in the large-$m$ regime.

\subsection*{Dependency on $T$}

To examine the dependency on $T$, we fix $K = 3$ and simulate different algorithms with $T\in \{1000,1100,\dots,12000\}$. Here we also choose $m \in \{5,52\}$ to compare the influence of large and small dimension respectively. The regret curves are presented in Figure~1(c) and Figure~1(d).
It can be shown that the regret of SquareCBwK with Online-GD grows much faster than SquareCBwK with GLMtron and LinUCB for both $m$ as $T$ increases. The regret of GLMtron and LinUCB are of the same order. Although LinUCB outperforms SquareCBwK with GLMtron slightly in the small $m$ regime, SquareCBwK with GLMtron exhibits an improved performance in the large $m$ regime, which again verifies the dependency on $T$ and $m$ in Table~3 of different algorithms.

From the above comparisons, we can see that in the large dimension setting of linear CBwK, SquareCBwK with online-GD oracle is independent of $m$ although with worse dependency on $T$, and SquareCBwK with Newtonized GLMtron oracles exhibits a better dependency on dimension $m$ than LinUCB, while keeping the same order of $T$ as LinUCB. These results together verify the superiority of our algorithm in the large dimension setting of linear CBwK.

\end{document}